\newtheorem{lemma}{Lemma}
\newtheorem{assumption}{Assumption}
\newtheorem{theorem}{Theorem}
\newtheorem{corollary}{Corollary}
\newcommand{\norm}[1]{\left\lVert#1\right\rVert}
\newcommand{\btilde}{\widetilde{\mathbf{b}}}
\newcommand{\h}{\mathbf{h}}
\newcommand{\Hbar}{\mathbf{\overline{h}}}
\newcommand{\hhat}{\mathbf{\widehat{h}}}
\newcommand{\Htilde}{\widetilde{H}}
\newcommand{\I}{\mathbf{I}}
\newcommand{\mar}{\mathrm{Mar}}
\newcommand{\bigO}{\mathcal{O}}
\newcommand{\reals}{\mathbb{R}}
\newcommand{\V}{\mathbf{V}}
\newcommand{\var}{\text{Var}}
\newcommand{\Var}{\mathrm{Var}}
\newcommand{\Vtilde}{\widetilde{\mathbf{V}}}
\newcommand{\w}{\mathbf{w}}
\newcommand{\wbar}{\mathbf{\overline{w}}}
\newcommand{\what}{\mathbf{\widehat{w}}}
\newcommand{\wtilde}{\mathbf{\widetilde{w}}}
\newcommand{\x}{\mathbf{x}}
\newcommand{\xtop}{\mathbf{x}^\top}
\newcommand{\xh}{\mathbf{x}^\mathbf{h}}
\newcommand{\xhtop}{\mathbf{x}^{\mathbf{h}\top}}
\title{Meta-learning with Stochastic Linear Bandits}
\author[1,3]{Leonardo Cella \thanks{leonardocella@gmail.com}}
\author[2]{Alessandro Lazaric \thanks{lazaric@fb.com}}
\author[3]{Massimiliano Pontil \thanks{massimiliano.pontil@iit.it}}
\affil[1]{University of Milan}
\affil[2]{Facebook AI Research}
\affil[3]{Istituto Italiano di Tecnologia and University College London}
\begin{document}
\maketitle

\begin{abstract}
    %In this work we study the problem of learning-to-learn with stochastic linear bandits tasks. 
    We investigate meta-learning procedures in the setting of stochastic linear bandits tasks. 
    The goal is to select a learning algorithm which works well on average over a class of bandits tasks, that are sampled from a task-distribution. Inspired by recent work on learning-to-learn linear regression, we consider a class of bandit algorithms that implement a regularized version of the well-known OFUL algorithm, where the regularization is a square euclidean distance to a bias vector. We first study the benefit of the biased OFUL algorithm in terms of regret minimization. We then propose two strategies to estimate the bias within the learning-to-learn setting. We show both theoretically and experimentally, that when the number of tasks grows and the variance of the task-distribution is small, our strategies have a significant advantage over learning the tasks in isolation.
\end{abstract}
\section{Introdution} \label{Sec:Introduction}
The multi-armed bandit (MAB)   is a simple framework formalizing the online learning problem constrained to partial feedback \citep[see][and references therein]{lattimore2018bandit, UCB1,RobbinsReviewed,RobbinsOriginal, MAB,RegretMAB}. In the last decades it has receiving increasing attention due to its wide practical importance and the
theoretical challenges in designing principled and efficient learning algorithms.
In particular, applications 
%this methodology has been applied to various fields, 
range from recommender systems \cite{li2010contextual, cella2019stochastic,bogers2010movie}, to clinical trials \cite{villar2015multi}, and to adaptive
routing \cite{awerbuch2008online}, among others.

In this paper, we are concerned with linear bandits \citep{OFUL,LinUCB,LinRel}, a consolidated MAB setting in which each arm is associated with a vector of features and the arm payoff function is modeled by a (unknown) linear regression of the arm feature vector. Our study builds upon the OFUL algorithm introduced in \citep{OFUL}, which in turned improved the theoretical analysis initially investigated in \citep{LinUCB, LinRel}. Nonetheless, it may still require a long exploration in order to estimate well the unknown linear regression vector. An appealing approach to solve this bottleneck 
%reduce such the long exploration could be 
is to leverage already completed tasks by transferring the previously collected experience to speedup the learning process. This framework finds its most common application in the recommendation system domain, where we wish to recommend contents to a new user by matching his preference. Our objective is to rely on past interactions corresponding to navigation of different users to speedup the learning process.
    
%    Along this Transfer learning direction, in the last decade much 
\vspace{.2truecm}
\noindent {\bf Previous Work.} During the past decade, there have been numerous theoretical investigation of transfer learning, with a particular 
attention to the problems of multi-task (MTL) \citep{MTLBatch, MTLExcessRIsk, MTLSparse, MTLBenefit, MTLLinear} and learning-to-learn (LTL) or meta-learning \citep{LTLBaxter, LTLRegret, LTLAroundCommonMean, LTLStatGuarantees, LTLSGD, LTLWeightedMajority}. The main difference between these two settings is that MTL aims to solve the problem of learning well on a prescribed set of tasks (the learned model is tested on the same tasks used during training), whereas LTL studies the problem of selecting a learning algorithm that works well on tasks from a common environment (i.e. sampled from a prescribed  distribution), relying on already completed tasks from the same environment \citep{LTLWeightedMajority, balcan2019provable, LTLAroundCommonMean,  LTLSGD}. In either case the base tasks considered have always been supervised learning ones. Recently, the MTL setting has been extended to a class of bandit tasks, with encouraging empirical and theoretical results  \citep{SeqTLMAB,SparseTLRL,CausalTLMAB,MTLContMAB,CrossDomain}, as well as to the case where tasks belong to a (social) graph, a setting that is usually referred to as~\textit{collaborative linear bandit} \citep{GOB,MTLinMAB,CLUB,CAB}. Differently from these works, the principal goal of this paper is to investigate the adoption of the meta-learning framework, which has been successfully considered within the supervised setting setting, to the setting of linear stochastic bandits.

%Given the relevance of the application domain, 

\vspace{.2truecm}
\noindent 
{\bf Contributions.} Our contribution is threefold. First, we introduce in Section \ref{Sec:RegOFUL} a variant of the OFUL algorithm in which the regularization term is modified by introducing a bias vector, 
%we introduce a novel regularized class of solution compared to the existing MTL approaches, where the regularization is defined as the square distance to a bias vector. 
analyzing the impact of the bias in terms of regret minimization. Second, and more importantly, in Sections \ref{Sec:SolutionA} and \ref{Sec:SolutionB} we propose two alternative approaches to estimate the bias, within the meta-learning setting. We establish theoretical results on the regret of these methods, 
highlighting that, when the task-distribution has a small variance and the number of tasks grows, adopting the proposed meta-learning methods lead a substantial benefit in comparison to using the standard OFUL algorithm. Finally, in Section \ref{Sec:Experiments} we compare experimentally the proposed methods with respect to the standard OFUL algorithm on both synthetic and real data.
%. They are presented in While the first approach  derives directly from the theoretical analysis, the  second one is inspired by the existing MTL literature.
%We then analyze the impact of the bias in terms of regret and show that under certain conditions on the \textit{meta-distribution}, the optimal approach is to set the bias equal to the mean of the \textit{meta-distribution}. In Section \ref{Sec:Solutions} we propose two alternative approaches to estimate the introduced bias, the first derives directly from the theoretical analysis while the second is inspired from the existing MTL approaches.
%These theoretical results show that 

\section{Learning Foundations}\label{Sec:Preliminaries}
    In this section we start by briefly recalling the standard stochastic linear bandit framework and we then present the considered LTL setting.
    \subsection{Linear Stochastic Bandits} 
    Let $T$ be a positive integers and let $[T]=\{1,\dots,T\}$. A Linear Stochastic MAB is defined by a sequence of $T$ interactions between the agent and the environment. At each round $t\in[T]$, the learner is given a decision set $\mathcal{D}_t\subseteq \mathbb{R}^d$ from which it has to pick an arm $\mathbf{x}_{t}\in\mathcal{D}_t$. Subsequently, it observes the corresponding reward $y_t = \mathbf{x}_{t}^\top \w^* + \eta_t$ which is defined by a linear relation with respect to an unknown parameter $\w^*\in\mathbb{R}^d$ combined with a sub-gaussian random noise term $\eta_t$. Thanks to the knowledge of the true parameter $\w^*$, at each round $t$ the optimal policy picks the arm $\mathbf{x}^*_t = \arg\max_{\mathbf{x}\in D_t} \mathbf{x}^\top \w^*$, maximizing the instantaneous reward. The learning objective is to maximize the cumulative reward, or equivalently, to minimize the \textit{pseudo-regret}
    \begin{equation}
    	R(T, \w^*) = \sum_{t=1}^{T}(\mathbf{x}^*_t - \mathbf{x}_{t})^\top \w^*. \nonumber
    \end{equation}
    As learning algorithm we consider OFUL \citep{OFUL}. At each round $t\in [T]$, it estimates $\w^*$ by ridge-regression over the observed arm reward pairs, that is, 
    \begin{align}\label{Eq:RidgeRegression}
    		\widehat{\w}^\lambda_t = \arg\min_{\w \in \mathbb{R}^d} \norm{\mathbf{X}_{t} \w - \mathbf{y}_{t}}_2^2 + \lambda \norm{\w}_2^2
    \end{align}	
    where $\mathbf{X}_{t}$ is the matrix whose rows are $\mathbf{x}_1^\top,\dots,\mathbf{x}_t^\top$, $\mathbf{I}$ is the $d\times d$ identity matrix and $\mathbf{y}_t = (y_1,\dots,y_t)^\top$. A key insight behind OFUL is to update online  
    %keep updated 
    a confidence interval $\mathcal{C}_t$ containing the true parameter $\w^*$ with high probability and centered in $\widehat{\w}^\lambda_t$. According to Theorem 2 of \citep{OFUL}, assuming that  $\|\w^*\|_2\leq S$ and $\|\mathbf{x}\|_2 \leq L$, for every $\mathbf{x} \in \cup_{s=1}^{t} \mathcal{D}_s$, then for any $\delta>0$, with probability  at least  $1-\delta$, for every $t\geq 0$, $\w^*$ lies in
	\begin{equation}\label{Eq:C_t}
		\mathcal{C}_t (\delta) = \bigg\{ \w \in \mathbb{R}^d: \norm{\widehat{\w}^\lambda_t  - \w }_{\mathbf{V}^\lambda_t} \leq R \sqrt{d\log \frac{1 + t L^2/\lambda}{\delta}} + \lambda^{\frac{1}{2}}S =: \beta^\lambda_t(\delta) \bigg\}
	\end{equation}
    where $\mathbf{V}^\lambda_t = \lambda \mathbf{I} + \mathbf{X}_t^\top \mathbf{X}_t$. According to the {optimism in the face of uncertainty principle}, at each round $t$ OFUL picks the arm $\mathbf{x}_t$ by solving the following optimization problem:
    \begin{equation}
        \mathbf{x}_t = \arg\max_{\mathbf{x} \in \mathcal{D}_t} \max_{\wtilde^\lambda_{t} \in \mathcal{C}_t} \mathbf{x}^\top \wtilde_{t}.
    \end{equation}
%    constrained to $\wtilde_t \in \mathcal{C}_t$. 
As was proved in Lemma 5 of \citep{SOFUL}, this corresponds to choose the input 
    \begin{equation}\label{Eq:OFU}
		\mathbf{x}_t \in \arg\max_{\mathbf{x}\in\mathcal{D}_t} \Big\{ \mathbf{x}^\top \widehat{\w}^\lambda_{t-1} + \beta_{t-1}(\delta)\norm{\mathbf{x}}_{\left(\mathbf{V}^\lambda_{t-1}\right)^{-1}} \Big\}.
	\end{equation}
	Finally, with probability at least $1- \delta$, OFUL satisfies (see Theorem 3 of \cite{OFUL}) the upper bound
	\begin{equation}
		R(T, \w^*) \leq 4 \sqrt{Td\log\Big(1 + \frac{TL}{\lambda d}\Big)}\bigg(\lambda^\frac{1}{2}S + R \sqrt{2\log(1/\delta) + d \log(1 + TL/(\lambda d))}\bigg). \nonumber
	\end{equation}
    We can now formally introduce the considered LTL learning framework for the family of tasks we analyze in this work: biased regularized linear stochastic bandits. 
    % LTL
    \subsection{LTL with Linear Stochastic Bandits.} We assume that each learning task $\w\in\mathbb{R}^d$ representing a linear bandit, is sampled from a task-distribution $\rho$ of bounded support in  $\mathbb{R}^d$. The objective is to design a meta-learning algorithm which is well suited to the environment. Specifically, we assume to receive a sequence of tasks $\w_1,\dots,\w_N,\dots$ which are independently sampled from the task-distribution (\textit{environment}) $\rho$. Due to the interactive nature of the bandit setting, we do not have any prior information related to a new task; we collect information about it along the interaction with the environment. After completing the $j$-th task, we store the whole interaction in a dataset $Z_j$ which is formed by $T$ entries $(\mathbf{x}_{j,t}, y_{j,t})^T_{t=1}$. Clearly, the dataset entries are not i.i.d sampled from a given distribution, but each dataset $Z_j$ corresponds to the recording of the learning policy in terms of the arm $\mathbf{x}_{j,t}$ picked from the decision set $\mathcal{D}_t^j$ and its corresponding reward $y_{j,t}$ while facing the task specified by the unknown vector $\w_j$. Starting from these datasets, we wish to design an algorithm $\mathcal{A}$ which suffers a low regret on a new task $\w_{N+1}\sim\rho$. This can be stated into requiring that $\mathcal{A}$ trained over $N$ datasets has small \textit{transfer-regret}:
    \begin{equation}
    	\mathcal{R}(T,\rho) = \mathbb{E}_{\w\sim\rho}\Big[\mathbb{E}\big[ R(T,\w)\big]\Big] \nonumber
    \end{equation}
    where the inner expectation is with respect to rewards realizations due to their noisy components.
\section{Biased Regularized OFUL}\label{Sec:RegOFUL}
    We now introduce BIAS-OFUL, a biased version of OFUL, which is instrumental for our meta-learning setting. 
    %a meta learning extension of OFUL. 
    Although not feasible, the proposed algorithm serves as a basis to study the theoretical properties of meta-learning with stochastic linear bandit tasks. In Section \ref{Sec:Experiments} we will present a more practical version of it.
    \paragraph{Regularized Confidence Sets} The idea of following a bias in a specific family of learning algorithms is not new in the LTL literature \citep{LTLAroundCommonMean,LTLSGD}.
    Inspired by \citep{LTLSGD} we modify the regularization in the computation of the confidence set centroid $\what^\lambda_t$, where the regularization is now defined as a square euclidean distance to the bias parameter $\mathbf{h}\in\mathbb{R}^d$. Given a fixed vector $\mathbf{h}$, at each round $t\in[T]$ BIAS-OFUL estimates the regularized centroid of the confidence ellipsoid as
    \begin{equation}
    	\what^\mathbf{h}_t = \arg\min_{\w}\norm{\mathbf{X}^\top_{t} \w - \mathbf{Y}_{t}}_2^2 + \lambda \norm{\w - \mathbf{h}}_2^2 \nonumber
    \end{equation}
    whose solution is given by
    \begin{equation}\label{Eq:BiasedParameterEstimate}
    	\what^\mathbf{h}_t = \left(\mathbf{V}^\lambda_t\right)^{-1} \mathbf{X}_t^\top (\mathbf{Y}_t - \mathbf{X}_t \mathbf{h}) + \mathbf{h}.
    \end{equation}
    This result follows directly from the standard ridge-regression by making the change of variable $\mathbf{w} = \mathbf{v} + \mathbf{h}$ and then solving for $\mathbf{v}$. \\
    %\begin{align*}
    %	\widehat{\mathbf{v}}^\mathbf{h}_t &=  \arg\min_{\mathbf{v}} \norm{\mathbf{X}^\top_{t} \mathbf{v} + \mathbf{X}_t\mathbf{h} - \mathbf{Y}_{t}}_2^2 + \lambda \norm{\mathbf{v}}_2^2 \\&=\arg\min_{\mathbf{v}}\norm{\mathbf{X}^\top_{t} \mathbf{v} - \mathbf{\widetilde{Y}}_t}_2^2 + \lambda \norm{\mathbf{v}}_2^2
    %\end{align*}
    %where we have used $\mathbf{\widetilde{Y}}_t = \mathbf{Y}_{t} -\mathbf{X}_t\mathbf{h}$. This gives $\widehat{\mathbf{v}}^\mathbf{h}_t	= \overline{\mathbf{V}}_t^{-1} \mathbf{X}^\top_t \mathbf{\widetilde{Y}}_t$    where the last step follows from the standard ridge-regression applied on the new target $\widetilde{\mathbf{Y}}_t$. Finally, getting back to $\what^\mathbf{h}_t$ we obtain the claimed result.\\
    As we have mentioned in the previous section, at each round $t$ OFUL keeps also updated a confidence interval $\mathcal{C}_t$ (see Equation \ref{Eq:C_t}) centered in $\what^\lambda_t$ which contains $\w^*$ with high probability. We now derive a confidence set for the biased regularized estimate $\mathbf{\widehat{w}}^\mathbf{h}_t$, assuming that we have access to an oracle to compute the distance $\norm{\mathbf{h} - \w^*}_2$. This seems quite restrictive, however later in the paper we will show how leveraging similar related tasks we can exploit this bound to take advantage of the bias version of OFUL, without having to know the above distance a-priori.
    \begin{theorem}\label{Th:BiasedConfidenceSet}
    	Assuming $\norm{\mathbf{h}}_2\leq S$, $\norm{\w^*}_2\leq S$ and $\norm{\mathbf{x}}_2 \leq L$ $\forall \; \mathbf{x} \in \cup_{s=1}^{t} \mathcal{D}_s$, then for any $\delta>0$, with probability at least $1-\delta$, $\forall t\geq 0$, $\w^*$ lies in the set
    	\begin{equation}\label{Eq:BiasedConfidenceSet}
    		\mathcal{C}^{\mathbf{h}}_t(\delta) = \Bigg\{ \w \in \mathbb{R}^d: \norm{\what^\mathbf{h}_t  - \w }_{\mathbf{V}^\lambda_t} \leq \lambda^{\frac{1}{2}}\norm{\mathbf{h} - \w^*}_2 +\nonumber + R \sqrt{2  \log\Bigg( \frac{\det\left(\mathbf{V}^\lambda_t\right)^{1/2}}{\det\left(\lambda I\right)^{1/2}\delta} \Bigg)}= \beta^\mathbf{h}_t(\delta) \Bigg\}.
    	\end{equation}
    \end{theorem}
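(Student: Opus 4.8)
The plan is to follow the template of the standard OFUL confidence-set analysis (Theorem 2 of \citep{OFUL}), the only novelty being the presence of the deterministic bias $\mathbf{h}$. First I would write the reward model in matrix form as $\mathbf{Y}_t = \mathbf{X}_t \w^* + \boldsymbol{\eta}_t$, where $\boldsymbol{\eta}_t = (\eta_1,\dots,\eta_t)^\top$ collects the sub-Gaussian noise, and substitute it into the closed form \eqref{Eq:BiasedParameterEstimate}. Using $\mathbf{X}_t^\top \mathbf{X}_t = \mathbf{V}^\lambda_t - \lambda\I$ to simplify $(\mathbf{V}^\lambda_t)^{-1}\mathbf{X}_t^\top\mathbf{X}_t = \I - \lambda (\mathbf{V}^\lambda_t)^{-1}$, a short computation yields the error decomposition
\begin{equation}
\what^\mathbf{h}_t - \w^* = -\lambda \left(\mathbf{V}^\lambda_t\right)^{-1}(\w^* - \mathbf{h}) + \left(\mathbf{V}^\lambda_t\right)^{-1}\mathbf{X}_t^\top \boldsymbol{\eta}_t, \nonumber
\end{equation}
which splits the estimation error into a deterministic \emph{bias} term and a stochastic \emph{noise} term. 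Applying the triangle inequality for the $\norm{\cdot}_{\mathbf{V}^\lambda_t}$ norm then reduces the theorem to bounding these two terms separately.

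For the bias term I would exploit the identity $\norm{(\mathbf{V}^\lambda_t)^{-1}\mathbf{u}}_{\mathbf{V}^\lambda_t} = \norm{\mathbf{u}}_{(\mathbf{V}^\lambda_t)^{-1}}$ (valid since $\mathbf{V}^\lambda_t$ is symmetric positive definite), with $\mathbf{u} = \w^* - \mathbf{h}$. Because $\mathbf{V}^\lambda_t = \lambda\I + \mathbf{X}_t^\top\mathbf{X}_t \succeq \lambda\I$, we have $(\mathbf{V}^\lambda_t)^{-1} \preceq \lambda^{-1}\I$, so
\begin{equation}
\lambda\norm{\left(\mathbf{V}^\lambda_t\right)^{-1}(\w^*-\mathbf{h})}_{\mathbf{V}^\lambda_t} = \lambda\norm{\w^*-\mathbf{h}}_{(\mathbf{V}^\lambda_t)^{-1}} \leq \lambda\cdot\lambda^{-1/2}\norm{\w^*-\mathbf{h}}_2 = \lambda^{\frac{1}{2}}\norm{\mathbf{h}-\w^*}_2, \nonumber
\end{equation}
which is exactly the first term of $\beta^\mathbf{h}_t(\delta)$. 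This is the step where the bias enters, and since it is entirely deterministic it contributes no probabilistic error.

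For the noise term, the same norm identity gives $\norm{(\mathbf{V}^\lambda_t)^{-1}\mathbf{X}_t^\top\boldsymbol{\eta}_t}_{\mathbf{V}^\lambda_t} = \norm{\mathbf{X}_t^\top\boldsymbol{\eta}_t}_{(\mathbf{V}^\lambda_t)^{-1}}$, i.e. exactly the self-normalized vector-valued martingale $\norm{\sum_{s=1}^t \eta_s \x_s}_{(\mathbf{V}^\lambda_t)^{-1}}$ that drives the OFUL analysis. I would then invoke the self-normalized concentration inequality of \citep{OFUL} (their Theorem 1): since $(\x_s)$ is predictable and $(\eta_s)$ is conditionally $R$-sub-Gaussian, with probability at least $1-\delta$, simultaneously for all $t\geq 0$,
\begin{equation}
\norm{\mathbf{X}_t^\top\boldsymbol{\eta}_t}_{(\mathbf{V}^\lambda_t)^{-1}} \leq R\sqrt{2\log\left(\frac{\det(\mathbf{V}^\lambda_t)^{1/2}}{\det(\lambda\I)^{1/2}\,\delta}\right)}, \nonumber
\end{equation}
which is precisely the second term of $\beta^\mathbf{h}_t(\delta)$. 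Summing the two bounds completes the proof.

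The main conceptual point — and the only place where one must be slightly careful — is that introducing the bias $\mathbf{h}$ does not disturb the martingale structure: $\mathbf{h}$ is fixed (oracle-given) and independent of the interaction, so the innovation sequence entering the self-normalized bound is unchanged from the unbiased case. Thus the hard analytic ingredient is imported wholesale from \citep{OFUL}, and the work specific to this theorem is the clean separation of the deterministic bias term together with the elementary $(\mathbf{V}^\lambda_t)^{-1}\preceq\lambda^{-1}\I$ estimate. A minor subtlety worth verifying is the uniformity over $t$: the supermartingale construction underlying the cited inequality already delivers the ``for all $t\geq 0$'' guarantee on a single event of probability $1-\delta$, so no union bound over rounds is needed.
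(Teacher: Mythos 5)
Your proposal is correct and takes essentially the same route as the paper's proof: the identical decomposition of $\what^\mathbf{h}_t - \w^*$ into the deterministic bias term $\lambda(\mathbf{V}^\lambda_t)^{-1}(\mathbf{h}-\w^*)$ and the noise term $(\mathbf{V}^\lambda_t)^{-1}\mathbf{X}_t^\top\boldsymbol{\eta}_t$, the same bound $\lambda\norm{\mathbf{h}-\w^*}_{(\mathbf{V}^\lambda_t)^{-1}} \leq \lambda^{1/2}\norm{\mathbf{h}-\w^*}_2$ via $\lambda_{\min}(\mathbf{V}^\lambda_t)\geq\lambda$, and the same invocation of the self-normalized concentration inequality (Theorem 1 of \citep{OFUL}) for the noise. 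The only cosmetic difference is that the paper derives the weighted-norm inequality through Cauchy--Schwarz with the test vector $\mathbf{x}=\mathbf{V}^\lambda_t(\what^\mathbf{h}_t-\w^*)$, whereas you use the identity $\norm{(\mathbf{V}^\lambda_t)^{-1}\mathbf{u}}_{\mathbf{V}^\lambda_t}=\norm{\mathbf{u}}_{(\mathbf{V}^\lambda_t)^{-1}}$ directly; these are equivalent.
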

\noindent    The proof can be found in the appendix material. We will now study the impact of the bias $\mathbf{h}$ in terms of regret.\\
    
\subsection{Regret Analysis with Fixed Bias}\label{SubSec:RegretFixedBias}
Given the confidence set defined in Theorem \ref{Th:BiasedConfidenceSet} and the \textit{optimism principle} translated into selecting the next arm according to Equation \ref{Eq:OFU}, we can analyze the expected pseudo-regret depending on the value of $\mathbf{h}$.

\begin{lemma}\label{Lemma:RegretFixedBias} (REG-OFUL Expected Regret) Under the same assumptions of Theorem \ref{Th:BiasedConfidenceSet}, if in addition, 
%assuming that 
for all $t$ and all $\mathbf{x}\in\mathcal{D}_t$, $\mathbf{x}^\top \mathbf{w}^* \in [-1,1]$, and considering $\lambda \geq 1$, we have:
\begin{align*}
    \overline{R}(T,\mathbf{w}^*) &= \mathbb{E}\left[R(T,\mathbf{w}^*)\right] \\
    &\leq C\sqrt{Td\log\left(1+\frac{TL}{\lambda d}\right)}\Bigg(\lambda^\frac{1}{2} \norm{\mathbf{w}^* - \mathbf{h}}_2 + R\sqrt{d \log(T + T^2 L/(\lambda d))}\Bigg)
\end{align*}
where the expectation is respect to the reward generation and $C>0$ is a constant factor.
\end{lemma}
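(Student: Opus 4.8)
The plan is to adapt the standard OFUL regret analysis of \citep{OFUL}, routing it through the biased confidence set $\mathcal{C}^{\h}_t(\delta)$ of Theorem~\ref{Th:BiasedConfidenceSet}, and then to convert the resulting high-probability bound into a bound in expectation. First I would condition on the event $E_\delta$ on which $\w^*\in\mathcal{C}^{\h}_t(\delta)$ holds simultaneously for all $t$; by Theorem~\ref{Th:BiasedConfidenceSet} this has probability at least $1-\delta$. On $E_\delta$, letting $\wtilde_t\in\mathcal{C}^{\h}_{t-1}(\delta)$ be the optimistic parameter realizing the choice~\eqref{Eq:OFU}, optimism gives $\x_t^\top\wtilde_t\ge (\x^*_t)^\top\w^*$, so that
\begin{equation*}
(\x^*_t-\x_t)^\top\w^* \le \x_t^\top(\wtilde_t-\w^*) \le \norm{\x_t}_{(\V^\lambda_{t-1})^{-1}}\norm{\wtilde_t-\w^*}_{\V^\lambda_{t-1}} \le 2\beta^{\h}_{t-1}(\delta)\,\norm{\x_t}_{(\V^\lambda_{t-1})^{-1}},
\end{equation*}
by Cauchy--Schwarz in the $\V^\lambda_{t-1}$-norm and the triangle inequality (both $\wtilde_t$ and $\w^*$ lie in the ellipsoid of radius $\beta^{\h}_{t-1}(\delta)$). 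The hypothesis $\x^\top\w^*\in[-1,1]$ bounds every instantaneous regret by $2$, so combining the two estimates and using $\beta^{\h}_{t-1}(\delta)\ge 1$ yields the truncated bound $(\x^*_t-\x_t)^\top\w^*\le 2\beta^{\h}_{t-1}(\delta)\min\{\norm{\x_t}_{(\V^\lambda_{t-1})^{-1}},1\}$.

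Next I would sum over $t\in[T]$. Since $\beta^{\h}_t(\delta)$ is non-decreasing in $t$, I replace $\beta^{\h}_{t-1}(\delta)$ by $\beta^{\h}_T(\delta)$ and apply Cauchy--Schwarz across the rounds,
\begin{equation*}
R(T,\w^*) \le 2\beta^{\h}_T(\delta)\sum_{t=1}^T\min\{\norm{\x_t}_{(\V^\lambda_{t-1})^{-1}},1\} \le 2\beta^{\h}_T(\delta)\sqrt{T\sum_{t=1}^T\min\{\norm{\x_t}^2_{(\V^\lambda_{t-1})^{-1}},1\}}.
\end{equation*}
The trailing sum is handled by the elliptical potential (log-determinant) lemma, giving $\sum_{t=1}^T\min\{\norm{\x_t}^2_{(\V^\lambda_{t-1})^{-1}},1\}\le 2\log\frac{\det\V^\lambda_T}{\det(\lambda\I)}\le 2d\log(1+TL^2/(\lambda d))$, which is exactly the logarithmic factor appearing in the claim and, via $\log\frac{\det(\V^\lambda_T)^{1/2}}{\det(\lambda\I)^{1/2}}\le\frac{d}{2}\log(1+TL^2/(\lambda d))$, inside $\beta^{\h}_T(\delta)$ as well. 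Substituting the explicit radius of Theorem~\ref{Th:BiasedConfidenceSet} produces, on $E_\delta$, a bound of the desired shape carrying an additional $R\sqrt{2\log(1/\delta)}$ inside the second factor.

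The final step converts this into the in-expectation statement. I would split $\Rbar(T,\w^*)=\mathbb{E}[R(T,\w^*)\mathbf{1}_{E_\delta}]+\mathbb{E}[R(T,\w^*)\mathbf{1}_{E_\delta^c}]$: the first term is at most the high-probability bound above, while the bounded-reward hypothesis forces $R(T,\w^*)\le 2T$ deterministically, so $\mathbb{E}[R(T,\w^*)\mathbf{1}_{E_\delta^c}]\le 2T\,\mathbb{P}(E_\delta^c)\le 2T\delta$. Choosing $\delta\asymp 1/T$ makes this residual $\mathcal{O}(1)$ and turns $2\log(1/\delta)$ into $\mathcal{O}(\log T)$, so the combined logarithmic factor $2\log(1/\delta)+d\log(1+TL^2/(\lambda d))$ is absorbed (up to the constant $C$) into $d\log(T+T^2L/(\lambda d))$, while the leftover additive $\mathcal{O}(1)$ is dominated by the leading $\sqrt{T}$ factor. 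This gives the stated inequality.

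I expect the genuine obstacle to be this last de-randomization rather than the regret decomposition, which is essentially that of \citep{OFUL}: one must pick $\delta$ so that simultaneously the failure contribution $2T\delta$ is negligible and the ensuing $\log(1/\delta)$ merges cleanly into the target factor $d\log(T+T^2L/(\lambda d))$. It is worth noting that the hypothesis $\x^\top\w^*\in[-1,1]$ is used twice and essentially---once for the truncation $\min\{\cdot,1\}$ and once for the crude $2T$ bound on $E_\delta^c$---so I would track it carefully; the remaining work is bookkeeping of absolute constants (and a harmless $L$ versus $L^2$ mismatch inside the logarithm), all absorbed by $C$.
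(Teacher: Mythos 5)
Your proposal is correct and follows essentially the same route as the paper's own proof: optimism plus Cauchy--Schwarz with the radius $\beta^{\h}_{t-1}(\delta)$ from Theorem~\ref{Th:BiasedConfidenceSet}, the log-determinant (elliptical potential) lemma of \citet{OFUL} after a round-wise Cauchy--Schwarz, and conversion to expectation by taking $\delta = 1/T$ and using the hypothesis $\x^\top\w^*\in[-1,1]$ to bound the regret by $2T$ on the failure event. The only difference is that you spell out this last de-randomization step explicitly, whereas the paper delegates it to Corollary~19.3 of \citet{lattimore2018bandit}.
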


We now analyze the regret for two different values of $\mathbf{h}$. In particular we wish to highlight how setting a good bias can speedup the process of learning with respect to using the standard OFUL approach \citep{OFUL}.

\begin{corollary}\label{Cor:RightH} Under the conditions of Lemma \ref{Lemma:RegretFixedBias}, the following bounds on the expected regret of BIAS-OFUL holds:
	\begin{itemize}
	\vspace{-.3truecm}
		\item[{\rm (i)}] {\rm Independent Task Learning (ITL)}, given by setting $\mathbf{h} = \mathbf{0}$ satisfies the following expected regret bound
			\begin{equation*}
			    \overline{R}(T,\mathbf{w}^*) \leq C \sqrt{Td\log\bigg(1 + \frac{TL}{ \lambda d}\bigg)}\bigg(\lambda^\frac{1}{2}S+ R\sqrt{d \log(T + T^2 L/(\lambda d))}\bigg) 
			\end{equation*}	
			which is of order $\bigO(d\sqrt{T})$ for any $\lambda\geq1$.
		\item[{\rm (ii)}] {\rm The Oracle}, given by setting $\mathbf{h}=\mathbf{w^*}$%=\arg\min_{\mathbf{h} \in \mathbb{R}^d}\norm{\mathbf{w}^* - \mathbf{h}}_2$, 
		satisfies 
			\begin{equation*}
    			\overline{R}(T,\mathbf{w}^*) \leq C \sqrt{Td\log\bigg(1 + \frac{TL}{ \lambda d}\bigg)} \bigg( R\sqrt{d \log(T + T^2 L/(\lambda d))}\bigg)
			\end{equation*}
				which is $0$ as $\lambda\to\infty$.
	\end{itemize}
\end{corollary}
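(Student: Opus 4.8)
The plan is to derive both bounds as immediate specializations of Lemma~\ref{Lemma:RegretFixedBias}, whose master bound
\[
\Rbar(T,\w^*) \leq C\sqrt{Td\log\Big(1+\tfrac{TL}{\lambda d}\Big)}\Big(\lambda^{1/2}\norm{\w^* - \h}_2 + R\sqrt{d\log(T + T^2L/(\lambda d))}\Big)
\]
depends on the bias only through the single quantity $\norm{\w^* - \h}_2$. Since every hypothesis of Lemma~\ref{Lemma:RegretFixedBias} is inherited by the corollary, the only work is to evaluate this distance for the two prescribed choices of $\h$ and then to read off the stated growth rates.

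For part~(i) I would set $\h = \bzero$, so that $\norm{\w^* - \h}_2 = \norm{\w^*}_2 \leq S$ by the standing boundedness assumption from Theorem~\ref{Th:BiasedConfidenceSet}. Substituting this into the master bound, together with monotonicity of the right-hand side in $\norm{\w^* - \h}_2$, gives exactly the displayed ITL bound. To certify the order claim, I would treat $\lambda \geq 1$, $L$, $R$ and $S$ as constants and track only the dependence on $T$ and $d$: the leading factor is $\bigO(\sqrt{Td\log T})$ since $\log(1 + TL/(\lambda d)) = \bigO(\log T)$, while the parenthetical term is $\bigO(\sqrt{d\log T})$ because the additive constant $\lambda^{1/2}S$ is eventually dominated. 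Their product is $\bigO(d\sqrt{T}\log T)$, i.e. $\bigO(d\sqrt{T})$ once logarithmic factors are suppressed, matching the usual convention in the bandit literature.

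For part~(ii) I would set $\h = \w^*$, which makes $\norm{\w^* - \h}_2 = 0$ and annihilates the first summand inside the parentheses, leaving precisely the oracle bound. The remaining claim is the limit as $\lambda \to \infty$. Here I would observe that the second factor $R\sqrt{d\log(T + T^2L/(\lambda d))}$ converges to the finite value $R\sqrt{d\log T}$, whereas the prefactor vanishes: using $\log(1+x)\le x$, we have $\sqrt{Td\log(1 + TL/(\lambda d))} \le \sqrt{Td\cdot TL/(\lambda d)} = T\sqrt{L/\lambda} \to 0$. The product therefore tends to $0$.

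The proof is essentially bookkeeping; there is no genuine obstacle, as all the analytic content is packaged in Lemma~\ref{Lemma:RegretFixedBias}. The only point requiring mild care is the asymptotic reasoning: for part~(i) one must be explicit that ``order $d\sqrt{T}$'' tacitly hides polylogarithmic factors, and for part~(ii) the vanishing of the bound hinges entirely on the $\log(1 + TL/(\lambda d))$ prefactor decaying to zero, which the elementary estimate $\log(1+x)\le x$ makes rigorous.
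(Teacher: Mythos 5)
Your proposal is correct, and its core is the same as the paper's: both parts follow by specializing the bound of Lemma~\ref{Lemma:RegretFixedBias} to $\h=\bzero$ (using $\norm{\w^*}_2\leq S$) and to $\h=\w^*$ (which kills the $\lambda^{1/2}\norm{\w^*-\h}_2$ term). The difference is in the asymptotic bookkeeping. For part~(ii) both you and the paper send $\lambda\to\infty$, but your argument is the more careful one: the paper simply replaces $\log(1+TL/(\lambda d))$ by $\log(1)=0$ inside the limit, implicitly using that the second factor $R\sqrt{d\log(T+T^2L/(\lambda d))}$ remains bounded, which you make explicit (it decreases to $R\sqrt{d\log T}$) together with the quantitative estimate $\log(1+x)\leq x$. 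For part~(i) you prove exactly what the statement asserts, namely that for any fixed $\lambda\geq 1$ the bound is $\bigO(d\sqrt{T})$ up to polylogarithmic factors, whereas the paper's own proof instead computes $\lim_{\lambda\to\infty}$ of the ITL bound via the substitution $\epsilon=TL/(\lambda d)$ and the fact $\log(1+\epsilon)/\epsilon\to 1$, obtaining the nonvanishing limit $CTS\sqrt{L}$; that computation is meant to contrast ITL (the bound stays linear in $T$ as $\lambda$ grows) with the oracle (the bound vanishes), but it does not literally establish the $\bigO(d\sqrt{T})$ claim. So your route is arguably more faithful to the statement as written, while the paper's buys the extra qualitative insight that over-regularizing without a good bias cannot help.
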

The proofs can be found in the supplementary material. The main intuition is that, as long as we can set $\h=\w^*$, the bigger the the regularization parameter $\lambda$ is, the more the Oracle policy tends to select the arm only based on $\w^*$, thereby becoming equivalent to the optimal policy.

\subsection{Transfer Regret Analysis with Fixed Bias}\label{SubSec:TransferRegretFixedBias}

Following the above analysis for the single task case, we now study the impact of the bias in the transfer regret bound. To this end, we introduce the variance and the mean absolute distance of a bias vector $\mathbf{h}$ relative to the environment of task,
%require the following additional notation,
\[
    \var_{\mathbf{h}} = \mathbb{E}_{\w\sim\rho}\big[\norm{\w-\mathbf{h}}_2^2\big], \quad \mar_{\mathbf{h}} = \mathbb{E}_{\w\sim\rho}\big[\norm{\w-\mathbf{h}}_2\big]
\]
and we observe that $\wbar = \mathbb{E}_{\w\sim\rho} \w = \arg\min_{\mathbf{h}\in\mathbb{R}^d}\var_{\mathbf{h}}$ and $\mathbf{m} = \arg\min_{\mathbf{h}\in\mathbb{R}^d}\mar_{\mathbf{h}}$. With this in hand, we can now analyze how the transfer regret can be upper bounded as a function of the introduced terms. 
\begin{lemma}\label{Lemma:TransferRegretBound}(Transfer Regret Bound) Under the same conditions in Theorem \ref{Th:BiasedConfidenceSet} and Lemma \ref{Lemma:RegretFixedBias}, the expected transfer regret of BIAS-OFUL can be upper bounded as:
    \begin{align*}
        \mathcal{R}(T,\rho) &\leq C \sqrt{Td \lambda \log\left(1 + \frac{TL}{\lambda d}\right) } \mar_\mathbf{h} + RCd\sqrt{T \log\bigg(T + \frac{T^2L}{\lambda d}\bigg)\log\bigg(1+\frac{TL}{\lambda d}\bigg)}\\
        &\leq C \sqrt{Td \lambda \log\left(1 + \frac{TL}{\lambda d}\right) \var_\mathbf{h}} + RCd\sqrt{T \log\bigg(T + \frac{T^2L}{\lambda d}\bigg)\log\bigg(1+\frac{TL}{\lambda d}\bigg)}
    \end{align*}
\end{lemma}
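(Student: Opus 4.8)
The plan is to derive the transfer-regret bound directly from the single-task bound of Lemma~\ref{Lemma:RegretFixedBias} by averaging over the task-distribution $\rho$, and then to invoke Jensen's inequality to pass from the mean absolute distance $\mar_\mathbf{h}$ to the square root of the variance $\var_\mathbf{h}$.

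First I would recall that, by definition, $\mathcal{R}(T,\rho) = \mathbb{E}_{\w\sim\rho}\big[\overline{R}(T,\w)\big]$, where $\overline{R}(T,\w) = \mathbb{E}[R(T,\w)]$ already denotes the noise-averaged single-task regret bounded in Lemma~\ref{Lemma:RegretFixedBias}. Since that bound holds for every fixed target $\w^* = \w$ with constants that do not depend on the particular task (only on $d$, $T$, $L$, $R$, $\lambda$), I can take $\mathbb{E}_{\w\sim\rho}$ on both sides. The prefactor $C\sqrt{Td\log(1 + TL/(\lambda d))}$ and the additive term $R\sqrt{d\log(T + T^2L/(\lambda d))}$ carry no dependence on $\w$ and therefore factor out of the expectation; only $\lambda^{1/2}\norm{\w - \mathbf{h}}_2$ is integrated, and by definition $\mathbb{E}_{\w\sim\rho}\big[\norm{\w - \mathbf{h}}_2\big] = \mar_\mathbf{h}$.

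Collecting terms then gives the first displayed inequality: pushing $\lambda^{1/2}$ under the square root of the prefactor turns $C\sqrt{Td\log(\cdot)}\,\lambda^{1/2}\mar_\mathbf{h}$ into $C\sqrt{Td\lambda\log(\cdot)}\,\mar_\mathbf{h}$, while multiplying the remaining $\sqrt{d}$ of $R\sqrt{d\log(\cdot)}$ into the prefactor's $\sqrt{Td\log(\cdot)}$ uses $\sqrt{d\cdot d} = d$ to produce the $RCd\sqrt{T\log(\cdot)\log(\cdot)}$ second term. For the second inequality I would apply Jensen's inequality (equivalently Cauchy--Schwarz for the random variable $\norm{\w-\mathbf{h}}_2$): $\mar_\mathbf{h} = \mathbb{E}_{\w\sim\rho}\big[\norm{\w-\mathbf{h}}_2\big] \le \big(\mathbb{E}_{\w\sim\rho}\big[\norm{\w-\mathbf{h}}_2^2\big]\big)^{1/2} = \sqrt{\var_\mathbf{h}}$, and substitute, absorbing $\sqrt{\var_\mathbf{h}}$ under the square root of the first term.

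No step is genuinely hard; the argument is essentially bookkeeping once Lemma~\ref{Lemma:RegretFixedBias} is available. The only point requiring a little care is the uniformity of the single-task bound over the support of $\rho$: the hypotheses of Theorem~\ref{Th:BiasedConfidenceSet} (in particular $\norm{\w^*}_2 \le S$) must hold for $\rho$-almost every task, which is ensured by the assumed bounded support of $\rho$, so that the expectation of the bound is finite and the task-independent constants can legitimately be pulled outside the integral.
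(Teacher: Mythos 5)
Your proposal is correct and follows exactly the paper's own argument: the first inequality is obtained by taking the expectation over the task-distribution $\rho$ of the single-task bound in Lemma~\ref{Lemma:RegretFixedBias}, and the second follows by Jensen's inequality applied to $\norm{\w-\mathbf{h}}_2$. The extra bookkeeping you spell out (factoring task-independent constants, absorbing $\lambda^{1/2}$ and $\sqrt{d}$, and the uniformity over the bounded support of $\rho$) is exactly what the paper leaves implicit.
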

\begin{proof}
The first statement is the expectation with respect to the \textit{task}-distribution $\rho$ applied to Lemma \ref{Lemma:RegretFixedBias}, while the second follows by applying Jensen's inequality.
\end{proof}
We can now replicate what we have done in Corollary \ref{Cor:RightH} and consider the transfer regret bound for two different values of the hyper-parameter $\mathbf{h}$. The main difference is that here, there is not an a-priori correct value for $\mathbf{h}$ as it depends on the task-distribution $\rho$.
\begin{corollary}\label{Cor:RightHTransfer} Under the same assumptions in  Theorem \ref{Th:BiasedConfidenceSet} and Lemma \ref{Lemma:RegretFixedBias}, and setting $\lambda = \frac{1}{T \Var_\mathbf{h}}$, the following bounds on the transfer regret hold
\vspace{-.3truecm}
	\begin{itemize}
			\item[{\rm (i)}] {\rm Independent Task Learning (ITL)}, given by setting the bias hypeparameter $\mathbf{h}$ equal to $\mathbf{0}$, satisfies
		\begin{equation*}
		    \mathcal{R}(T,\rho) \leq \Bigg[1 + \sqrt{T d \log\bigg(T + \frac{T^3L\var_{\mathbf{0}}}{ d}\bigg)}\Bigg]C\sqrt{d \log\bigg(1+\frac{T^2L\var_{\mathbf{0}}}{ d}\bigg)}
		\end{equation*}
		\item[ {\rm (ii)}] {\rm The Oracle}, given by setting the bias hyperparameter $\mathbf{h}$ equal to the mean task $\wbar$, satisfies
		\begin{equation*}
		    \mathcal{R}(T,\rho) \leq \Bigg[1 + \sqrt{T d \log\bigg(T + \frac{T^3L\var_{\wbar}}{ d}\bigg)}\Bigg] C\sqrt{d \log\bigg(1+\frac{T^2L\var_{\wbar}}{ d}\bigg).}
		\end{equation*}
	\end{itemize}
\end{corollary}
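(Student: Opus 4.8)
The plan is to obtain both bounds by a direct substitution of the prescribed regularization level $\lambda = \tfrac{1}{T\Var_\mathbf{h}}$ into the second (variance-based) inequality of Lemma \ref{Lemma:TransferRegretBound}, followed by algebraic simplification. The single most important observation is that this choice of $\lambda$ is engineered so that the product inside the first square root collapses, $T\lambda\,\var_\mathbf{h} = T\cdot\tfrac{1}{T\var_\mathbf{h}}\cdot\var_\mathbf{h} = 1$. This is precisely what turns the bias-dependent leading term from something scaling with the variance into a term of order $\sqrt{d}$ carrying only a logarithmic residual dependence on $\var_\mathbf{h}$, which is the whole point of the corollary and the mechanism by which a good bias accelerates learning.

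Concretely, first I would rewrite the two logarithmic arguments under the new $\lambda$. Since $\tfrac{TL}{\lambda d} = \tfrac{T^2 L\,\var_\mathbf{h}}{d}$ and $\tfrac{T^2 L}{\lambda d} = \tfrac{T^3 L\,\var_\mathbf{h}}{d}$, the first term of the bound becomes $C\sqrt{d\log\bigl(1+\tfrac{T^2 L\,\var_\mathbf{h}}{d}\bigr)}$ after using $T\lambda\,\var_\mathbf{h}=1$, while the second term becomes $RC\,d\sqrt{T\log\bigl(T+\tfrac{T^3 L\,\var_\mathbf{h}}{d}\bigr)\log\bigl(1+\tfrac{T^2 L\,\var_\mathbf{h}}{d}\bigr)}$. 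Absorbing the sub-Gaussian constant $R$ into $C$, I would then factor out the common quantity $C\sqrt{d\log\bigl(1+\tfrac{T^2 L\,\var_\mathbf{h}}{d}\bigr)}$ from both terms; the residual of the second term is exactly $\sqrt{Td\log\bigl(T+\tfrac{T^3 L\,\var_\mathbf{h}}{d}\bigr)}$, which reproduces the bracketed factor $\bigl[1+\sqrt{Td\log(T+T^3 L\,\var_\mathbf{h}/d)}\bigr]$ appearing in the statement.

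Finally, the two displayed inequalities follow by specialization: part (i) is the above with $\mathbf{h}=\mathbf{0}$, so $\var_\mathbf{h}=\var_{\mathbf{0}}$; part (ii) is the same with $\mathbf{h}=\wbar$, so $\var_\mathbf{h}=\var_{\wbar}$, which by the identity $\wbar=\arg\min_\mathbf{h}\var_\mathbf{h}$ recorded before the lemma is the smallest achievable variance and hence the tightest such bound. I do not expect a genuine conceptual obstacle here—the argument is a substitution plus bookkeeping—so the only real care needed is in the factoring step, where dividing the second term by the common factor $\sqrt{d\log(1+T^2L\var_\mathbf{h}/d)}$ must cancel exactly one of its two logarithms and reduce its leading $d$ to $\sqrt{d}$, leaving precisely $\sqrt{Td\log(T+T^3L\var_\mathbf{h}/d)}$ with no stray factor of $d$ or of the inner logarithm.
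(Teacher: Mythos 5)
Your proposal is correct and follows essentially the same route as the paper: the paper's own proof is simply the observation that the corollary ``directly follows from Lemma \ref{Lemma:TransferRegretBound}'' upon substituting $\lambda = \frac{1}{T\Var_\mathbf{h}}$, which is exactly your substitution $T\lambda\var_\mathbf{h}=1$ followed by rewriting the logarithms and factoring out the common term (your algebra checks out, including the $d/\sqrt{d}=\sqrt{d}$ reduction and the cancellation of one logarithm). Your treatment of $R$ by absorption into $C$ matches the paper's own implicit handling, so there is no gap to report.
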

\begin{proof}
These results directly follow from Lemma \ref{Lemma:TransferRegretBound}. We have picked $\lambda=\frac{1}{T\Var_\mathbf{h}}$ in order to highlight the multiplicative term $\log(1+\Var_\mathbf{h})$ which tends to zero according to the variance $\Var_\mathbf{h}$ of the task-distribution $\rho$.
\end{proof}
Therefore, running BIAS-OFUL with bias $\mathbf{h}$ equal to $\wbar$ brings a substantial benefit with respect to the unbiased case when the second moment of the task-distribution $\rho$ is much bigger than its variance. Specifically, we introduce the following assumption.
\begin{assumption}\label{Ass:LowVariance}(Low Biased Variance)
    \begin{equation}
        \var_\wbar = \mathbb{E}_{\w\sim\rho} \norm{\w-\wbar}_2^2 \ll \mathbb{E}_{\w\sim\rho} \norm{\w}_2^2 = \var_\mathbf{0}.
    \end{equation}
\end{assumption}
Notice also that the choice $\lambda = 1/(T\var_\mathbf{h})$, implies that, as $\var_\wbar$ tends to $0$, the regret upper bound of the oracle case tends to zero too reflecting the result of Corollary \ref{Cor:RightH}. More in general, we can state that when the environment (i.e. the task-distribution $\rho$) satisfies Assumption \ref{Ass:LowVariance}, leveraging on tasks similarity would gives a substantial benefit compared to learning each task separately. Since in practice the mean task parameter $\wbar$ is unknown, in the following sections we propose two alternative approaches to estimate $\wbar$.%, relying on both the current task history and on the datasets associated to the terminated tasks $Z_1,\dots,Z_N$. 
%As we shall see, both approaches deal with a bias-variance trade-off. Since there is not always a right solution we will compare them practically.
\section{A High Variance Solution}\label{Sec:SolutionA}
\begin{algorithm}[!t]
\caption{Within Task Algorithm: BIAS-OFUL}
\label{Alg:REG-OFUL}
\begin{algorithmic}[1]
\REQUIRE{$\lambda > 0, \hhat_0 \in \reals^d$}
\STATE $\what_0^\h = \hhat_0, \V_0^{-1} = \frac{1}{\lambda} \I$.
\FOR{$t = 1$  {\bfseries to} $T$}
\STATE GET decision set $D_t$
\STATE SELECT $\x_t \in D_t$ with bias $\h=\hhat^\lambda_{j,t}$
\STATE OBSERVE reward $y_t$
\STATE UPDATE $\V_t = \V_{t-1} + \x_t \xtop_t$
\STATE UPDATE $\hhat_t$ according to the meta-algorithm
\STATE UPDATE $\what_t^\mathbf{h}$ using Equation \ref{Eq:BiasedParameterEstimate}
\ENDFOR
\end{algorithmic}
\end{algorithm}
\begin{algorithm}[!t]
\caption{Meta-Algorithm: Estimating $\hhat^\lambda$}
\label{Alg:META}
\begin{algorithmic}[1]
\FOR{$j = 1$ {\bfseries to} $N$}
    \STATE SAMPLE new task $\w_j \sim \rho$
    \STATE SET $\hhat^\lambda_{j,0}$
    \STATE RUN Algorithm \ref{Alg:REG-OFUL} with parameter $\hhat^\lambda_{j,0}$
\ENDFOR
\end{algorithmic}
\end{algorithm}
In this section, we present our first meta-learning method. We begin by introducing some additional notation. We let $\xh_{j,t}$ be the arm pulled by the BIAS-OFUL algorithm (Algorithm \ref{Alg:REG-OFUL}) at round $t$-th of the $j$-th task. We denote by $\V_{j,T} = \sum_{s=1}^T \xh_{j,s} \xhtop_{j,s} $ the design matrix computed with the $T$ arms picked during the $j$-th task. For each terminated task $j\in[N]$ we also define $\mathbf{b}_{j,T} = \mathbf{X}_{j,T}^\top \mathbf{Y}_{j,T}$. Finally, we introduce the \textit{mean estimation error}
\[
    \epsilon_{N,t}(\rho) = \norm{\wbar - \hhat^\lambda_{N,t}}_2^2
\]
which is the error of our estimate $\hhat^\lambda_{N,t}$ with respect to the true mean task $\wbar$, at round $t$ of the $N+1$-th task.
% First Approach
\subsection{Averaging the Estimated Task Parameters}
%We start by presenting 
An intuitive solution to bound the estimation error $\epsilon_{N,t}$ is to simply average of the estimated task parameters $\what^\lambda_j$ computed according to Equation $\ref{Eq:RidgeRegression}$ on the dataset $Z_j$ without considering any bias.
\begin{equation}\label{Eq:ApproachA}
    \hhat_{N,t}^\lambda = \frac{1}{NT + t} \Bigg( \sum_{j=1}^{N} T \what^\lambda_{j,T} + t \what^\lambda_{N+1, t} \Bigg).
\end{equation}
By adopting this approach, we have the following bound on the transfer regret.
% Transfer Regret Bound: Approach A
\begin{theorem}\label{Th:TransferRegretBoundA}(Transfer Regret Bound).
    Let the assumptions of Lemma \ref{Lemma:TransferRegretBound} hold and let $\hhat_{N,t}^\lambda$ be defined as in Equation \eqref{Eq:ApproachA}. Then, it hold that
        \begin{align*}
        \mathcal{R}&(T,\rho) \leq d C\sqrt{T \log\left(1 + \frac{T^2L \bigg(\Var_\wbar + \epsilon_{N,T}(\rho) \bigg)}{d}\right)}
    \end{align*}
    where the mean estimation error can be bound as
    \begin{equation*}
        \sqrt{\epsilon_{N,T}(\rho)} \leq H_\rho(N+1, \wbar) + \max_{j=1,\dots,N} \frac{\beta^\lambda_j\big(1/T\big)}{\lambda^{1/2}_{\min}(\mathbf{V}^\lambda_{j,T})}.
    \end{equation*}
    Here, $\beta^\lambda_j\big(\frac{1}{T}\big)$ refers to the confidence interval computed with OFUL (see Equation \ref{Eq:C_t}) and $H_\rho(N+1, \wbar)=\norm{\wbar - \Hbar_{N, t}}_2$ with $\Hbar_{N,t+1} = \frac{1}{NT+t} \big( \sum_{j=1}^{N} T \mathbf{w}_j + t \w_{N+1} \big)$.
\end{theorem}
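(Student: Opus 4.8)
The plan is to establish the two displayed inequalities separately and then read them off together. For the transfer-regret bound I would invoke Lemma~\ref{Lemma:TransferRegretBound} with the data-dependent bias $\h=\hhat^\lambda_{N,T}$; the only thing to supply is a rewriting of the variance $\var_{\hhat^\lambda_{N,T}}$ occurring there. Since $\wbar=\mathbb{E}_{\w\sim\rho}\w$ is the minimizer of the variance, the cross term in the bias--variance (orthogonality) decomposition vanishes, giving for any fixed $\h$
\[
\mathbb{E}_{\w\sim\rho}\big[\norm{\w-\h}_2^2\big]=\var_\wbar+\norm{\wbar-\h}_2^2 ,
\]
so that $\var_{\hhat^\lambda_{N,T}}=\Var_\wbar+\epsilon_{N,T}(\rho)$. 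Substituting this into the second (Jensen) form of Lemma~\ref{Lemma:TransferRegretBound} and then taking $\lambda=1/\big(T(\Var_\wbar+\epsilon_{N,T}(\rho))\big)$, exactly as in Corollary~\ref{Cor:RightHTransfer}, collapses the factor $\sqrt{Td\lambda\,\var_\h}$ to a constant multiple of $\sqrt{d}$ and leaves the claimed dominant term $dC\sqrt{T\log\big(1+T^2L(\Var_\wbar+\epsilon_{N,T}(\rho))/d\big)}$ once the lower-order logarithmic factors are absorbed into $C$.

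For the estimation-error bound I would introduce the idealized average $\Hbar_{N,T}$ formed from the \emph{true} task vectors $\w_j$ (and the running vector $\w_{N+1}$) with the same weights as in \eqref{Eq:ApproachA}, and split by the triangle inequality
\[
\sqrt{\epsilon_{N,T}(\rho)}=\norm{\wbar-\hhat^\lambda_{N,T}}_2\leq\norm{\wbar-\Hbar_{N,T}}_2+\norm{\Hbar_{N,T}-\hhat^\lambda_{N,T}}_2 .
\]
The first summand is $H_\rho(N+1,\wbar)$ by definition. For the second, observe that $\Hbar_{N,T}-\hhat^\lambda_{N,T}$ is a convex combination (the weights $T/(NT+t)$ and $t/(NT+t)$ sum to one) of the per-task errors $\w_j-\what^\lambda_{j,T}$, so by the triangle inequality it is at most $\max_j\norm{\w_j-\what^\lambda_{j,T}}_2$.

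Each such per-task error I would control through the \emph{unbiased} OFUL confidence set of Equation~\eqref{Eq:C_t}, which applies here because the self-normalized bound underlying it holds for \emph{any} filtration-adapted arm sequence, in particular for the arms actually pulled by BIAS-OFUL. On the event $\w_j\in\mathcal{C}_{j,T}(\delta)$ one has $\norm{\what^\lambda_{j,T}-\w_j}_{\mathbf{V}^\lambda_{j,T}}\leq\beta^\lambda_j(\delta)$, and converting the $\mathbf{V}^\lambda_{j,T}$-norm to the Euclidean norm through $\norm{v}_2\leq\norm{v}_{\mathbf{V}^\lambda_{j,T}}/\lambda_{\min}^{1/2}(\mathbf{V}^\lambda_{j,T})$ bounds this by $\beta^\lambda_j(\delta)/\lambda_{\min}^{1/2}(\mathbf{V}^\lambda_{j,T})$. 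Setting $\delta=1/T$ and taking the maximum over the tasks reproduces the stated $\max_{j}\beta^\lambda_j(1/T)/\lambda_{\min}^{1/2}(\mathbf{V}^\lambda_{j,T})$ term.

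The step I expect to be most delicate is the probabilistic bookkeeping rather than the algebra. The bias $\hhat^\lambda_{N,t}$ is updated online and, through $\what^\lambda_{N+1,t}$, depends on the arms pulled on the current task, so $\hhat$ is not independent of the $\w\sim\rho$ appearing in $\var_{\hhat}$; making the orthogonality decomposition rigorous requires conditioning carefully on the data generated by the first $N$ tasks (and the running $N+1$-th). Likewise, each confidence statement holds only on a high-probability event, so a union bound over $j$ is needed before the $\max_j$ bound can be asserted simultaneously, which is exactly why the per-task level is taken as $1/T$. Finally, the prescription $\lambda=1/(T\var_{\hhat})$ is formally circular, since it depends on the unknown $\Var_\wbar$ and on $\epsilon_{N,T}(\rho)$; I would treat it, as the paper does for the oracle in Corollary~\ref{Cor:RightHTransfer}, as an idealized tuning chosen only to expose the dependence on the task-distribution variance rather than as an implementable rule.
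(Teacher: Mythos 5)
Your proposal follows essentially the same route as the paper's proof: you plug $\h=\hhat^\lambda_{N,T}$ into Lemma \ref{Lemma:TransferRegretBound} with the tuning $\lambda=1/(T\var_{\h})$ of Corollary \ref{Cor:RightHTransfer}, and you bound $\sqrt{\epsilon_{N,T}(\rho)}$ by the triangle inequality through the idealized average $\Hbar_{N,T}$, a convex-combination bound by the worst per-task error $\max_j\norm{\w_j-\what^\lambda_{j,T}}_2$, the $\lambda^{1/2}_{\min}(\mathbf{V}^\lambda_{j,T})$ norm conversion, and the OFUL confidence width $\beta^\lambda_j(1/T)$ --- exactly the chain of inequalities in the paper. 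Minor differences are in your favor: your exact orthogonality decomposition $\var_{\hhat^\lambda_{N,T}}=\Var_\wbar+\epsilon_{N,T}(\rho)$ is slightly sharper than the paper's triangle-inequality step, and the probabilistic caveats you flag (the union bound over tasks behind the $\max_j$, the dependence of $\hhat^\lambda_{N,t}$ on the current task's data, and the circularity of the $\lambda$ prescription) are genuine issues that the paper's proof passes over in silence.
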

\begin{proof}
We follow the reasoning in Corollary \ref{Cor:RightHTransfer}, this time setting $\mathbf{h}= \hhat^\lambda_{N,T}$, and then observe that
    \begin{align*}
    \sqrt{\epsilon_{N,T}(\rho)} &= \norm{\wbar - \hhat^\lambda_{N,T}}_2 \leq \norm{\wbar - \Hbar_{N, T}}_2 + \norm{\Hbar_{N,T} - \hhat^\lambda_{N,T}}_2\\
    &\quad\quad= H_\rho(N+1, \wbar) + \norm{\Hbar_{N,T} - \hhat^\lambda_{N,T}}_2\\ 
    &\quad\quad\leq H_\rho(N+1, \wbar) + \max_{1\leq j \leq N+1}\norm{\w_j - \widehat{\w}^\lambda_{j,T}}_2\\
    &\quad\quad\leq H_\rho(N+1, \wbar) + \max_{1\leq j \leq N+1}\frac{\norm{\w_j - \widehat{\w}^\lambda_{j,T}}_{\mathbf{V}^\lambda_{j,T}}}{\lambda^{1/2}_{\min}(\mathbf{V}^\lambda_{j,T})}\\
    &\quad\quad\leq H_\rho(N+1, \wbar) + \max_{1\leq j \leq N+1}\frac{\beta^\lambda_j\big(1/T\big)}{\lambda^{1/2}_{\min}(\mathbf{V}^\lambda_{j,T})}. 
\end{align*}
\end{proof}
The term $H_\rho(N+1, \wbar)$ denotes the estimation error of the empirical mean computed from the $N+1$ tasks vectors  $(\w_j)_{j=1}^{N+1}$, relative to the true mean $\wbar$. Since the $\w_j$ are independent random $d$-dimensional vectors drawn from $\rho$ we can apply the following vectorial version of the Bennett's inequality \citep[see, e.g.,][Lemma 2]{smale2007learning}.
\begin{lemma}\label{Lemmma:VecBennett}
    Let $\w_1,\dots,\w_N$ be N independent random vectors with values in $\mathbb{R}^d$ sampled from the task-distribution $\rho$. Assuming that $\forall j \in [N]: \norm{\w_j}\leq S$, then for any $0<\delta<1$, it holds, with probability at least $1-\delta$
    \begin{equation*}
        H(N,\wbar) \leq \frac{2\; \log(2/\delta)\;S}{N} + \sqrt{\frac{2\; \log(2/\delta)\;\Var_\mathbf{0}}{N}}.
    \end{equation*}
\end{lemma}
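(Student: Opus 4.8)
The plan is to obtain the bound as a direct application of the cited Hilbert-space concentration inequality of \citet{smale2007learning}, with $\reals^d$ playing the role of the Hilbert space and $\w\sim\rho$ the role of the bounded random vector. The essential observation is that their inequality already controls the deviation of an empirical mean from its expectation in terms of the \emph{uncentered} second moment and the \emph{uncentered} almost-sure radius, so no explicit centering step is needed and the statement follows by a substitution of quantities rather than any fresh analysis.

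Concretely, I would instantiate the inequality with the random vector $\xi=\w$ and its $N$ independent copies $\w_1,\dots,\w_N$, matching three quantities to its hypotheses. First, the mean: $\mathbb{E}_{\w\sim\rho}[\w]=\wbar$, so that the controlled empirical-mean deviation is exactly $\norm{\tfrac1N\sum_{j=1}^N\w_j-\wbar}_2=H(N,\wbar)$. Second, the almost-sure radius: the hypothesis $\norm{\w_j}_2\le S$ supplies the bound that multiplies the $\bigO(1/N)$ term. Third, the second moment: $\mathbb{E}_{\w\sim\rho}\norm{\w}_2^2=\Var_{\bzero}$, which is precisely the variance proxy $\sigma^2(\xi)=\mathbb{E}\norm{\xi}^2$ appearing under the square root in the cited lemma. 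Reading off the conclusion then gives, with probability at least $1-\delta$,
\[
    H(N,\wbar)\;\le\;\frac{2S\log(2/\delta)}{N}+\sqrt{\frac{2\,\Var_{\bzero}\,\log(2/\delta)}{N}},
\]
which is the claim.

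The only point requiring care is the identification of the variance term: the statement uses the \emph{uncentered} second moment $\Var_{\bzero}=\mathbb{E}\norm{\w}_2^2$ rather than the true variance $\Var_{\wbar}=\mathbb{E}\norm{\w-\wbar}_2^2$. This is exactly the quantity $\sigma^2(\xi)=\mathbb{E}\norm{\xi}^2$ in the cited formulation, so nothing is lost by matching it directly. Had one instead centered the variables by hand — setting $\xi=\w-\wbar$, for which $\norm{\xi}_2\le\norm{\w}_2+\norm{\wbar}_2\le 2S$ since $\norm{\wbar}_2\le\mathbb{E}\norm{\w}_2\le S$ — one would recover the sharper $\Var_{\wbar}\le\Var_{\bzero}$ under the root, at the price of a worse constant ($4S$) in the linear term. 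Either route is legitimate; the displayed statement corresponds to the uncentered application.

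Consequently I do not expect any genuine analytic obstacle here: the whole difficulty is packaged inside the quoted inequality, whose proof proceeds via Pinelis's Chernoff / exponential-supermartingale argument for Hilbert-space-valued sums. Given that the result is invoked rather than reproved, the argument reduces to the one-line substitution of the three quantities above; the hard part would arise only if a self-contained derivation were demanded, in which case reproducing that martingale bound in $\reals^d$ would be the substantive step.
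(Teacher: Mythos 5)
Your proposal is correct and matches the paper exactly: the paper offers no independent proof of this lemma, but simply invokes Lemma 2 of \citet{smale2007learning} with $\xi=\w$, almost-sure bound $S$, and uncentered second moment $\sigma^2(\xi)=\mathbb{E}_{\w\sim\rho}\norm{\w}_2^2=\Var_{\mathbf{0}}$, precisely the substitution you describe. Your additional remark on the centered alternative (yielding $\Var_{\wbar}$ under the root at the cost of a $4S$ constant) is a sound observation but does not change the route.
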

The above lemma says that the error $H_\rho(N,\wbar)$ goes to zero as $N$ grows to infinity. Therefore the estimation error $\epsilon_{N,t}(\rho)$ is dominated by the ``variance'' term 
%decreases by a factor $O(1/N + \Var_\mathbf{0}/\sqrt{N})$ in the number of tasks.\\
%Even if this first solution is quite natural, it suffers from a potentially high variance due to the term 
$\max_{1\leq j\leq N} \beta^\lambda_j\big(1/T\big)\lambda^{-1/2}_{\min}(\mathbf{V}^\lambda_{j,T})$, associated with the worst past task. By relying on linear regression results \cite{StochasticRidgeRegression} we have that $\lambda_{\min}(\V_j)\geq\log T$. Moreover, as $\lambda_{\min}(\V_j^\lambda)\geq\lambda + \lambda_{\min}(\V_j)$, we observe an increasing sensitivity of the incurred variance to the $\lambda$ parameter for small value of $T$. Finally, according to our choice of $\lambda=1/T\Var_{\hhat^\lambda}$, the suffered variance increases with the variance of our estimator. The latter in turns increases with the variance of the distribution $\rho$, which corresponds to the case in which Assumption \ref{Ass:LowVariance} tends to be violated.
\section{A High Bias Solution}\label{Sec:SolutionB}
In this section we will present an alternative estimator of the true mean $\wbar$, which is inspired by the existing multi-task bandit literature \cite{CLUB, CAB, MTLinMAB}. This estimator exploits together all the samples associated to the past tasks $Z_1,\dots,Z_N$, with the aim of reducing the variance. This is unlike the previous estimator which separately considers the ridge-regression estimates $\what_1,\dots,\what_N$ in Equation \ref{Eq:ApproachA}. As we will see, this approach will reduce the variance but it will introduce an extra-bias. Before presenting this second approach we require some more notation. We let $\Vtilde_{N,t} = \sum_{j=1}^{N} \mathbf{V}_{N,T} + \V_{N+1, t}$ the global design matrix containing the design matrices associated to past tasks $\V_{1,T},\dots,\V_{N,T}$ and the current design matrix $\V_{N+1,t}$. Analogously $\btilde_{N,t} = \sum_{j=1}^{N} \mathbf{b}_{j,T} + \mathbf{b}_{N+1,t}$ refers to global counterpart of $\mathbf{b}_{j,t}$. We denote with $\left|A\right| =  \sup\{\norm{A\mathbf{\mathbf{x}}}: \mathbf{x} \in \reals^d ,\norm{\mathbf{x}} = 1\}$ the norm of matrix A induced by the norm $\norm{\cdot}$, which if no specified is the Euclidean norm. Finally, we denote with $\sigma_{\max}(\mathbf{A})$ the biggest singular value associated with matrix $\mathbf{A}$.
\subsection{Global Ridge Regression}
In order to reduce the variance, our second approach estimates, at each round $t$ of the new sampled task $N+1$, the mean task $\wbar$ as a \textit{global ridge regression} computed over all the available samples as
\begin{equation}\label{Eq:GlobalRR}
    \hhat^\lambda_{N,t} = \left(\Vtilde^\lambda_{N,t-1}\right)^{-1}\btilde_{N,t-1}.
\end{equation}
Our next result provides a bound on the transfer regret of this proposed strategy. The proof is presented in Section~\ref{Sec:AppendixSolutionB} of the appendix. % Transfer Regret Upper Bound : Theorem 3
\begin{theorem}\label{Th:TransferRegretBoundB}(Transfer Regret Bound). Let the assumptions of Lemma \ref{Lemma:TransferRegretBound} hold and let $\hhat^\lambda_{N,t}$ be defined as in Equation \eqref{Eq:GlobalRR}. Then, the following upper bound holds
\begin{equation*}
    \mathcal{R}(T,\rho) \leq dC \sqrt{T \log\left(1 + \frac{T^2L \bigg(\Var_\wbar + \epsilon_{N,t}(\rho) \bigg)}{d}\right)}
\end{equation*}
where the mean estimation error can be bound as
\begin{align*}
    \sqrt{\epsilon_{N,T}(\rho)} &\leq 
    \frac{S}{\lambda {+} \nu_{\min}} + 2(N{+}1) \max_{1\leq j\leq N{+}1} \widetilde{H}(N{+}1, \w_j)\\
    &+ R\sqrt{\frac{2}{\lambda {+} \nu_{\min}}\log\bigg(T\bigg(1 + \frac{N T L^2}{\lambda d}\bigg)\bigg)} + H_\rho(N {+} 1,\wbar)
\end{align*}
and defined $\nu_{\min} = \lambda_{\min}(\Vtilde_{N,T})$ and we introduced \[
\Htilde(N,\w_j) =  H_\rho(j,\w_j) \sigma_{\max} \Big(\mathbf{V}_{j,T}\Vtilde^{-1}_{N,T}\Big)\]
which is a weighted form of the %convergence
estimation error $H_\rho(j,\w_j)$ towards the current task vector $\w_j$, where the weights are defined in terms of tasks misalignment $\sigma_{\max} \big(\mathbf{V}_{j,T}\Vtilde^{-1}_{N,T}\big)$.
\end{theorem}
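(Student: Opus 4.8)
The plan is to follow the same two-step template as Theorem~\ref{Th:TransferRegretBoundA}: I would obtain the transfer-regret bound by instantiating the fixed-bias analysis at the data-dependent choice $\h=\hhat^\lambda_{N,T}$, so that the substance of the proof reduces to controlling the mean estimation error $\sqrt{\epsilon_{N,T}(\rho)}=\norm{\wbar-\hhat^\lambda_{N,T}}_2$. Concretely, since $\wbar=\mathbb{E}_{\w\sim\rho}\w$, the bias--variance identity gives $\Var_{\hhat^\lambda_{N,T}}=\Var_\wbar+\norm{\wbar-\hhat^\lambda_{N,T}}_2^2=\Var_\wbar+\epsilon_{N,T}(\rho)$. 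Plugging $\h=\hhat^\lambda_{N,T}$ into Lemma~\ref{Lemma:TransferRegretBound} and Corollary~\ref{Cor:RightHTransfer} with $\lambda=1/(T\Var_{\hhat^\lambda_{N,T}})$, i.e. replacing $\Var_\wbar$ by $\Var_\wbar+\epsilon_{N,T}(\rho)$, reproduces the stated regret bound exactly as in Theorem~\ref{Th:TransferRegretBoundA}. It then remains only to bound $\sqrt{\epsilon_{N,T}(\rho)}$.

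For the estimation error I would start from an exact algebraic decomposition. Writing the rewards as $\mathbf{Y}_{j,T}=\mathbf{X}_{j,T}\w_j+\eta_j$ gives $\mathbf{b}_{j,T}=\V_{j,T}\w_j+\mathbf{X}_{j,T}^\top\eta_j$, hence $\btilde_{N,T}=\sum_{j=1}^{N+1}\V_{j,T}\w_j+\mathbf{z}$ with $\mathbf{z}=\sum_{j,s}\xh_{j,s}\eta_{j,s}$ the aggregated reward noise. Setting $P_j:=(\Vtilde^\lambda_{N,T})^{-1}\V_{j,T}$, so that $\sum_{j}P_j=\I-\lambda(\Vtilde^\lambda_{N,T})^{-1}$, one gets the identity
\[
\wbar-\hhat^\lambda_{N,T}=\lambda(\Vtilde^\lambda_{N,T})^{-1}\wbar-\sum_{j=1}^{N+1}P_j(\w_j-\wbar)-(\Vtilde^\lambda_{N,T})^{-1}\mathbf{z},
\]
which splits the error into a regularization term, a weighted-average term, and a noise term, to be bounded separately by the triangle inequality.

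The two outer terms are routine. The regularization term is controlled using $\sigma_{\max}\big((\Vtilde^\lambda_{N,T})^{-1}\big)=(\lambda+\nu_{\min})^{-1}$ and $\norm{\wbar}_2\le S$, giving the $S/(\lambda+\nu_{\min})$ summand. The noise term is the self-normalized one: the arms $\xh_{j,s}$ are chosen adaptively, so $\mathbf{z}$ is a martingale-difference sum over the joint filtration of all $N+1$ tasks, and the concentration already used in Theorem~\ref{Th:BiasedConfidenceSet}, applied at the aggregate Gram matrix $\Vtilde^\lambda_{N,T}$, together with $\norm{(\Vtilde^\lambda_{N,T})^{-1}\mathbf{z}}_2\le(\lambda+\nu_{\min})^{-1/2}\norm{\mathbf{z}}_{(\Vtilde^\lambda_{N,T})^{-1}}$, the determinant/potential argument, and confidence level $\delta=1/T$, yields the $R\sqrt{\frac{2}{\lambda+\nu_{\min}}\log\big(T(1+NTL^2/(\lambda d))\big)}$ term.

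The misalignment term $\sum_{j}P_j(\w_j-\wbar)$ will be the main obstacle. Recentering it at the empirical mean $\Hbar_{N,T}$ peels off $\big(\sum_j P_j\big)(\Hbar_{N,T}-\wbar)$, whose norm is at most $\norm{\Hbar_{N,T}-\wbar}_2=H_\rho(N+1,\wbar)$, which vanishes as $N\to\infty$ by the vectorial Bennett inequality (Lemma~\ref{Lemmma:VecBennett}). What remains, $\sum_j P_j(\w_j-\Hbar_{N,T})$, is a \emph{matrix}-weighted deviation in which task $j$ carries the data-dependent weight $P_j$ rather than the uniform weight $\frac{1}{N+1}\I$. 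The plan is to rewrite this deviation as a telescoping sum over the running partial means of the tasks, so that each of the $N+1$ increments factorizes into a geometric misalignment factor $\sigma_{\max}(\V_{j,T}\Vtilde^{-1}_{N,T})$ — how over-represented task $j$ is in some direction relative to uniform weighting — times a statistical deviation factor $H_\rho(j,\w_j)$ of task $j$ from its running mean; bounding the $N+1$ increments by their maximum produces the $2(N+1)\max_j\Htilde(N+1,\w_j)$ contribution. The hard part is precisely that the weights $\V_{j,T}$ are generated by BIAS-OFUL and are therefore correlated with both the task draws and the noise, so the telescoping must be purely algebraic, valid for every realization of the design matrices, with the singular-value factor $\sigma_{\max}(\V_{j,T}\Vtilde^{-1}_{N,T})$ isolating this data dependence. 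Assembling the four bounds through the triangle inequality then gives the claimed bound on $\sqrt{\epsilon_{N,T}(\rho)}$.
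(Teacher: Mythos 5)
Your overall route is the same as the paper's: instantiate Lemma \ref{Lemma:TransferRegretBound} at the data-dependent bias $\h=\hhat^\lambda_{N,T}$, reduce the theorem to bounding $\sqrt{\epsilon_{N,T}(\rho)}=\norm{\wbar-\hhat^\lambda_{N,T}}_2$, and split that error into a regularization piece (controlled through the smallest eigenvalue of $\Vtilde^\lambda_{N,T}$, giving the $S/(\lambda+\nu_{\min})$ term), a self-normalized noise piece (controlled by the same concentration argument used in Theorem \ref{Th:BiasedConfidenceSet}, at level $\delta=1/T$), a matrix-weighted task-deviation piece, and the empirical-mean error $H_\rho(N+1,\wbar)$ handled by Lemma \ref{Lemmma:VecBennett}. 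The difference is organizational: the paper first introduces the intermediate vector $\Hbar'_{N,T}=\Vtilde^{-1}_{N,T}\big(\sum_{j}\V_{j,T}\w_j\big)$ and proves two separate lemmas, one for $\hhat^\lambda_{N,T}-\Hbar'_{N,T}$ (noise plus regularization, Lemma \ref{Lemma:EstimationError}) and one for $\wbar-\Hbar'_{N,T}$ (bias, Lemma \ref{Lemma:Bias}), whereas your single identity is centered at $\wbar$ from the start. Your version is in fact slightly cleaner: because your regularization term involves $\norm{\wbar}_2\le S$ directly, you never need to bound $\norm{\Hbar'_{N,T}}_2$ by $\norm{\Hbar'_{N,T}-\Hbar_{N,T}}_2+\norm{\Hbar_{N,T}}_2$, which is precisely the step that forces the paper to invoke its bias lemma a second time and produces the factor $2$ in $2(N+1)\max_j\Htilde(N+1,\w_j)$; your decomposition yields that term with coefficient $1$.

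The one place you go astray is the step you single out as the main obstacle. The telescoping over running partial means is both unsubstantiated (it is not clear it yields the claimed per-increment factorization, and the factor $2$ you say it ``produces'' has no visible source in your decomposition) and unnecessary. From your own setup the term closes in one line, exactly as in the paper's Lemma \ref{Lemma:Bias}: $\norm{\sum_j P_j(\w_j-\Hbar_{N,T})}_2\le\sum_j\sigma_{\max}(P_j)\,\norm{\w_j-\Hbar_{N,T}}_2\le(N+1)\max_j\sigma_{\max}\big(\V_{j,T}\Vtilde^{-1}_{N,T}\big)\norm{\w_j-\Hbar_{N,T}}_2$, where one uses that $(\Vtilde^\lambda_{N,T})^{-1}$ and $\Vtilde^{-1}_{N,T}$ commute (they share eigenvectors, since $\Vtilde^\lambda_{N,T}=\Vtilde_{N,T}+\lambda\I$), so $\sigma_{\max}\big((\Vtilde^\lambda_{N,T})^{-1}\V_{j,T}\big)\le\sigma_{\max}\big(\Vtilde^{-1}_{N,T}\V_{j,T}\big)$, and that symmetry gives $\sigma_{\max}\big(\Vtilde^{-1}_{N,T}\V_{j,T}\big)=\sigma_{\max}\big(\V_{j,T}\Vtilde^{-1}_{N,T}\big)$. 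No adaptivity issue arises because, as you yourself note, the bound is realization-wise algebra on the design matrices. One further shared blemish: like the paper, you silently drop a factor $\lambda$ in the regularization term; the exact bound is $\lambda S/(\lambda+\nu_{\min})$, which matches the stated $S/(\lambda+\nu_{\min})$ only when $\lambda\le 1$.
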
 
The previous variance term $\frac{\beta^\lambda_j(1/T)}{\lambda_{\min}(\mathbf{V}^\lambda_{j,T})}$ has been now replaced by $\frac{\beta^\lambda(1/NT)}{\lambda+\nu_{\min}}$. It should be easy to observe that $\nu_{\min} \geq \frac{N}{d}\lambda_{\min}(\V_j) \; \forall j \in [N]$ which leads a reduction of factor $d/N$ to the variance, which goes to zero as $N$ goes to infinity. This gain does not come for free, in fact this approach introduces a potentially high bias: $2(N+1) \max_{j=1,\dots,N+1} \widetilde{H}(N+1, \w_j)$ which increases with the tasks misalignment $\sigma_{\max} \big(\mathbf{V}_{j,T}\Vtilde^{-1}_{N,T}\big)$.
% Task Alignment
\subsection{Tasks Misalignment}
We now analyze the tasks misalignment factors appearing in Theorem \ref{Th:TransferRegretBoundB}, namely, the quanitities $\sigma_{\max} \big(\mathbf{V}_{j,t}\Vtilde^{-1}_{N,t}\big)$ and $\Htilde(N,\w_j)$. For this purpose, we consider two opposite environments of tasks.

In the first case we assume that all the tasks parameters are equal to each other and far from the zero $d$-dimensional vector. This scenario, which corresponds to put all the mass of the task-distribution $\rho$ on a single task parameter $\wbar$, is clearly in agreement with Assumption \ref{Ass:LowVariance}. We  expect this to be the most favorable scenario, since after completing a task, we face exactly the same task again and again. In this case, independently on the covariance matrices, whose construction also depends on the decision sets available in the different tasks, it is simple to observe that we are not suffering any bias, that is, $\Htilde(N,\w_j)=0 \quad$ for every $j \in [N]$ as all the task parameters are equal to each other.

The second environment is characterized by a task distribution $\rho$ that is unform on finitely many orthogonal tasks.
%scenario we consider the case where the task parameters are all orthogonal each other. 
For instance, this is the scenario when $\rho$ is uniform distributed over the standard basis vectors $\{(S,0,\dots,0),\dots,(0,\dots,0,S)\} \in \mathbb{R}^d$. Differently from the previous scenario, here after completing a task we will probably face an orthogonal task. It should be quite natural to see that this is the most unfavorable case and to expect to not have transfer learning between tasks. This is confirmed by the regret bound due to the misalignment expressed by the covariance matrices $\sigma_{\max} \big(\mathbf{V}_{j,t}\Vtilde^{-1}_{N,t}\big)$. Indeed, since we can have at most $d$ misaligned arms, we have the following upper bound $\frac{d}{N}$ to the term $\sigma_{\max}\big(\mathbf{V}_{j,t}\Vtilde_{N,t}^{-1}\big)$. 
Based on these observations we can conclude that 
%These considerations give us the following meaningful result, that is: 
the bigger the cardinality of the set of basis induced by the distribution $\rho$, 
%tasks' distribution on the task parameters, 
the larger the number of completed tasks required to have a proper transfer. We will now focus on an intermediate case satisfying Assumption \ref{Ass:LowVariance}.
In order to control the term $\sigma_{\max} \big(\mathbf{V}_{j,t}\Vtilde^{-1}_{N,t}\big)$ and to give the possibility to generate aligned matrices when dealing with similar tasks, we introduce an additional mild assumption:
\begin{assumption}\label{Ass:FixedDecisionSet}(Shared Induced Basis)
The decision sets are shared among all the tasks and tasks sampled according to Assumption $\ref{Ass:LowVariance}$ induces that the covariance matrices generated by running the BIAS-OFUL algorithm (Algorithm \ref{Alg:REG-OFUL}) share the same basis:
    \begin{equation}
        \mathbf{V}_i = \mathbf{P} \Sigma_i \mathbf{P^*}, \quad \forall i \in [N].
    \end{equation}
\end{assumption}
This assumption is quite mild as it just states that similar tasks share the same pulled arms with no restrictions on the pulling frequency.  This is the case when the decision set is fixed among different rounds and tasks, that is, $\mathcal{D}_{j,t} = \mathcal{D} \; \forall j \in [N]$ and $\forall t \in [T]$, and consists of $d$ orthogonal arms. If Assumption \ref{Ass:FixedDecisionSet} is satisfied, then we can obtain the following bound: $\sigma_{\max} \big(\mathbf{V}_{j,t}\Vtilde^{-1}_{N,t}\big)\leq 1$. Furthermore, if we denote by $M$ the number of tasks necessary to achieve a stationary behavior of the BIAS-OFUL policy in terms of covariance matrices, then 
%the following bound holds: 
$\sigma_{\max} \big(\mathbf{V}_{j,t}\Vtilde^{-1}_{N,t}\big)\leq1/(N-M)$. 
\subsection{Smallest Global Eigenvalue $\nu_{\min}$}
It only remains to analyze the term $\nu_{\min}$. We observe that 
%In particular $\nu_{\min}$ 
it satisfies the lower bound
\begin{equation*}
    \nu_{\min} = \lambda_{\min} \Bigg(\sum_{j=1}^{N+1} \V_{j,T} \Bigg)\geq \sum_{j=1}^{N+1} \lambda_{\min}(\V_{j,T}) \geq (N+1) \log T
\end{equation*}
where in the last step we have relied on linear regression result from \citep{StochasticRidgeRegression} which shows that the
condition $\mathcal{O}(\lambda_{\min})=\log(\lambda_{\max})$ is required to guarantee asymptotic consistency, necessary
to have sublinear anytime regret. Since $\min_{j\in[N]}\lambda_{\max}(V_j) = \mathcal{O}(T)$, this condition implies that $\min_{j\in[N]}\lambda_{\min}(V_j) \geq \log T$.
\section{Experiments} \label{Sec:Experiments}
In this section we test the real effectiveness of the proposed approaches. The theoretical results stated that the method presented in Section \ref{Sec:SolutionA} does not introduce any bias but it may incur an additional variance according to the variance of the task-distribution $\Var_\rho$. On the contrary, the solution proposed in Section \ref{Sec:SolutionB} which massively uses all the observed samples together, reduces the variance (at least) by a factor $d/N$, at the price of an extra bias term.

As it was mentioned in Section \ref{Sec:RegOFUL}, the parameter $\w^*$ associated to each single task is unknown, therefore we cannot compute the gap $\|{\hhat^\lambda - \w^*}\|_2$ defining the term $\beta_t^\mathbf{h}(1/T)$. The main issue is that according to Equation \ref{Eq:OFU}, in order to pick the next arm, it seems that the algorithm needs to compute its exact value. However, we can simply split the norm and rely on the assumption that $\|\w^*\|\leq S$, so to remove the dependency on $\w^*$. Indeed, it is important to emphasize that the real knowledge transfer happens in terms of $\w^\mathbf{h}$, see Equation \ref{Eq:BiasedParameterEstimate}. This can be noticed by observing that the gap $\norm{\hhat^\lambda - \w^*}_2$ equally affects all the available arms.
%at round $\x \in D_t$.
%
\subsection{Experimental Results}
In all the presented experiments the policy OPT knows the parameter $\w_j$ associated to task $j$ and picks the next arm as $\mathbf{x}_{j,t} = \arg\max_{\mathbf{x}\in D_{j,t}} \mathbf{x}^\top \w_j$. The policies AVG-OFUL and RR-OFUL implement Algorithms \ref{Alg:REG-OFUL} and \ref{Alg:META} and estimate $\hhat$ as per Equations \ref{Eq:ApproachA} and Equation \ref{Eq:GlobalRR}, respectively. The Oracle policy knows the mean task parameter $\wbar$ and uses it as the bias $\h$ in BIAS-OFUL (Corollary \ref{Cor:RightHTransfer} (ii)). Analogously, the ITL policy consists of BIAS-OFUL with bias set equal to $\mathbf{0}$, see Corollary \ref{Cor:RightHTransfer} (i). The regularization  hyper-parameter $\lambda$ was selected over a logarithmic scale. We will start by considering a pair of synthetic experiments in which we show how the hyper-parameter $\lambda$ affects the performance. We then present experiments on two real datasets. We will denote with $K$ the size of the decision set $\mathcal{D}$.

\paragraph{Synthetic Data}
\begin{figure}[t]
    \centering
    \includegraphics[width=0.435\textwidth]{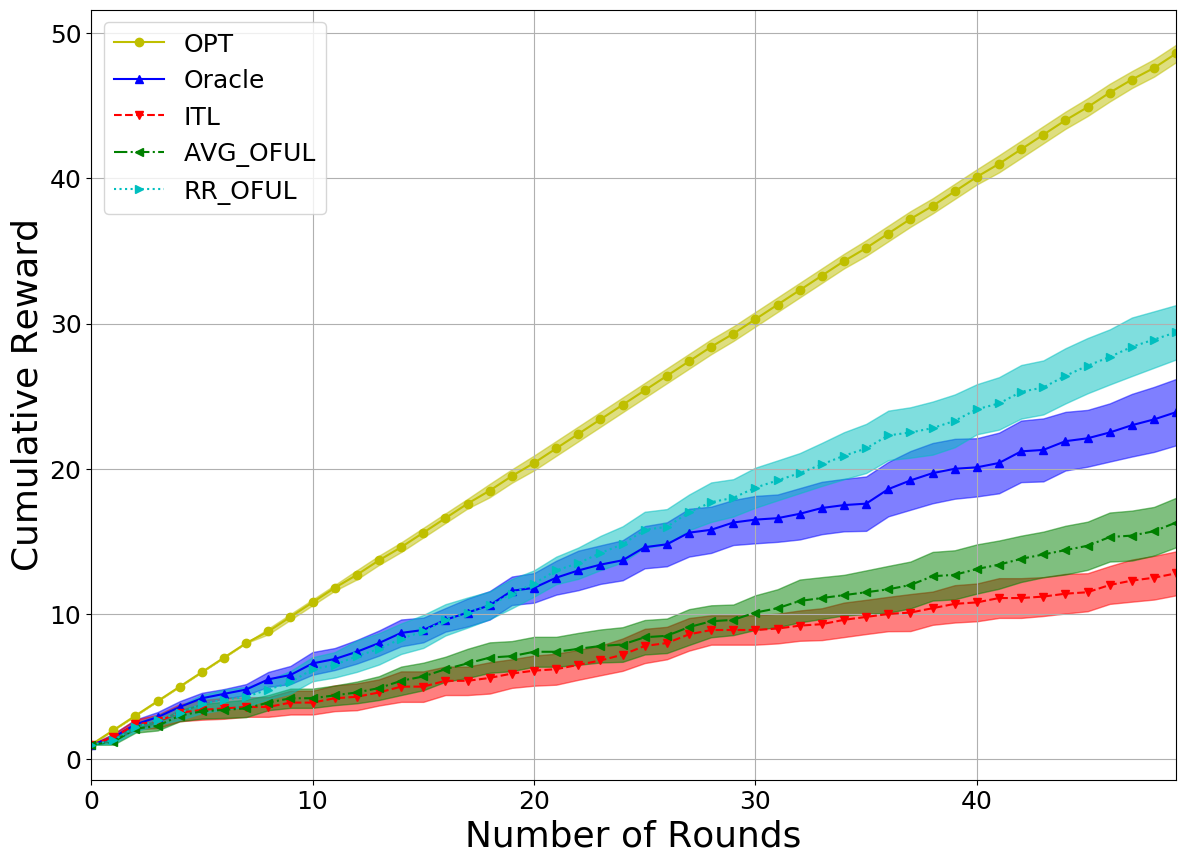}
    \vspace{-.2truecm}
    \caption{Cumulative reward measured after $N=10$ tasks and averaged over $10$ test tasks, with $\lambda=1$.}
    \label{fig:SynthLReg}
\end{figure}
\begin{figure}[t]
    \centering
    \includegraphics[width=0.435\textwidth]{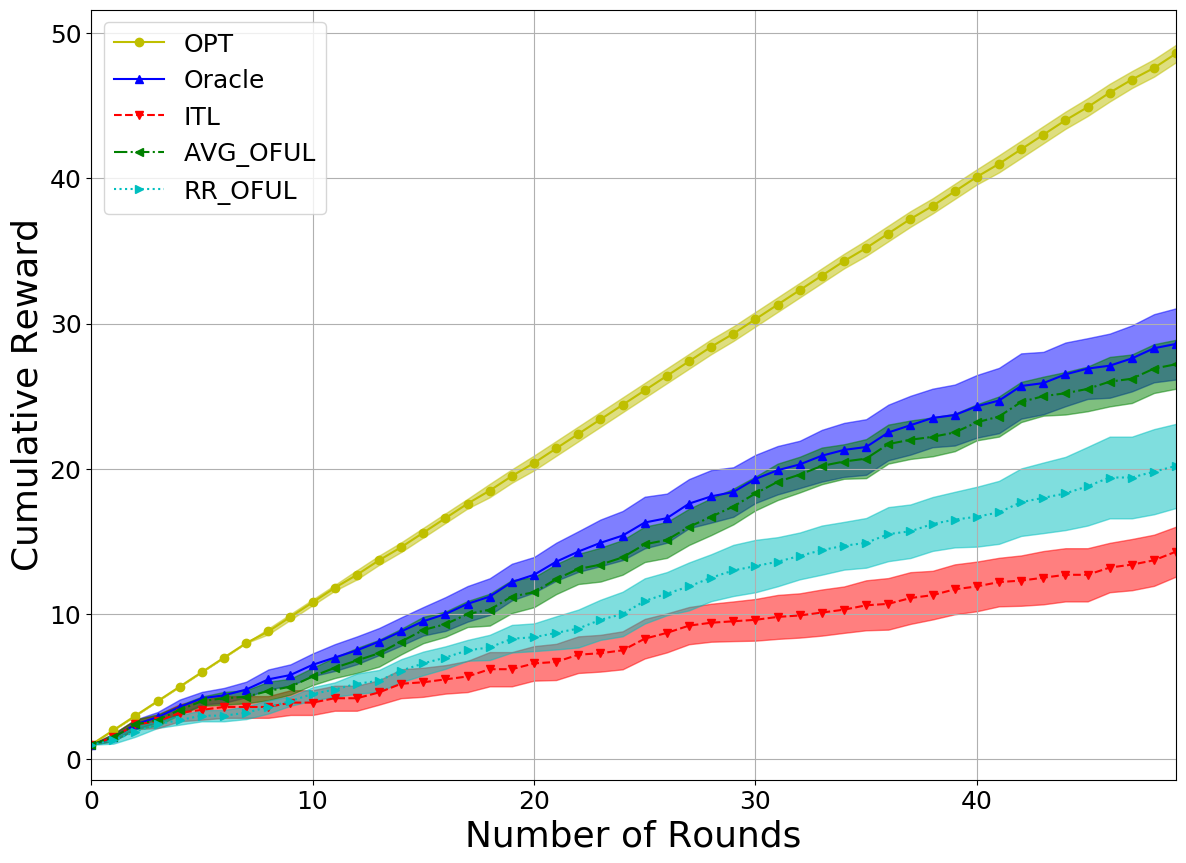}
        \vspace{-.2truecm}
    \caption{Cumulative reward measured after $N=10$ tasks and averaged over $10$ test tasks, with $\lambda=100$.}
    \label{fig:SynthHReg}
\end{figure}
Similarly to what was done in \cite{LTLSGD}, we first generated an environment of tasks in which running the Oracle policy is expected to outperform the ITL approach. In agreement with Assumption \ref{Ass:LowVariance}, we sample the task vectors from a distribution characterized by a much smaller variance than its second moment. That is, each task parameter $\w_j$ is sampled from a Gaussian distribution with mean $\wbar$ given by the vector in $\mathbb{R}^d$ with all components equal to $1$ and $\Var_\rho =  1$. As far as the decision set concerns, we first generate a random square matrix $\mathbf{P}$ with size $d$ and then compute its qr factorization $\mathbf{P}=\mathbf{Q}\mathbf{R}$, where $\mathbf{Q}$ is a matrix with orthonormal columns and $\mathbf{R}$ is an upper-triangular matrix. We then associate to each base arm the direction associated to a column of the matrix $\mathbf{Q}$. This will guarantee having arms that are almost orthogonal each other. Finally, at each round $t \in [T]$ the decision set $\mathcal{D}_t$ is initialized as a set of $K$ random vector that are first shifted towards the respective arm base direction and then normalized. Notice that by following this generation mechanism we avoid any inductive bias between the task vectors and the arms ones, as they are actually independent.  Each task consists of $T=50$ rounds, in which we have $K=5$ arms of size $d=20$. In order to generate the rewards, we first compute the inner product between the user (task) vector and the arm (input) vector, we shift the resulting output interval $[0,1]$ and then add to a Gaussian noise $\mathcal{N}\big(0.5,1\big)$, to compute the rewards.  Finally, we assigned reward $1$ to the arm having the maximum final reward, $0$ to the others. In Figures \ref{fig:SynthLReg} and \ref{fig:SynthHReg}, we report the results generated with $\lambda = 1$ and $\lambda = 100$, respectively. It is easy to observe that the stronger the regularization, the more the AVG-OFUL tends to the Oracle. Conversely, RR-OFUL get penalized with the increasing of $\lambda$, due to its bias. 
\vspace{-.2truecm}
\paragraph{LastFM Data}
The first dataset we considered is extracted from the music streaming service Last.fm \cite{Cantador:RecSys2011} (\textit{http://www.lastfm.com}). It contains 1892 possible users and 17632 artists. This dataset contains information about the artists listened by a given user, and we used this information to define the payoff function. We first removed from the set of items those with less than $30$ ratings and then we repeat the same procedure for the users. This operation yields an user rating matrix of size 741 x 538. Starting from this reduced matrix we derived the arms and the users vectors by computing an SVD decomposition where we kept only the first $d=10$ features associated to the users and to the items. In order to consider tasks satisfying Assumption \ref{Ass:LowVariance}, we randomly pick an user and compute the set of its $N=20$ most similar users according to the l2-distance between their vectors. Each task lasts $T=5$ rounds and consists of $K=5$ arms. At each round $t$, the decision set consists of one arm whose rating was at least equal to $4$ and $K-1$ arms whose ratings were at most equal to $3$. The rewards were then generated analogously to the synthetic case. The Oracle policy knows $\wbar$ which is computed as the average between the $N=20$ considered user vectors. In Figure \ref{fig:TransferRegretLastFM} (and Figure \ref{fig:TransferRegretMovielens}) we displayed the cumulative regret suffered with respect to the optimal policy, which during each task $j \in [N]$ knows the true user parameter $\w_j$. The vertical yellow lines indicate the end of each task. From the presented results we can observe that both the proposed policies AVG-OFUL and RR-OFUL outperform the ITL approach, while the Oracle policy is consistent with Corollary $\ref{Cor:RightHTransfer}$ and Assumption \ref{Ass:LowVariance}.
\begin{figure}[t]
    \centering
    \includegraphics[width=0.45\textwidth]{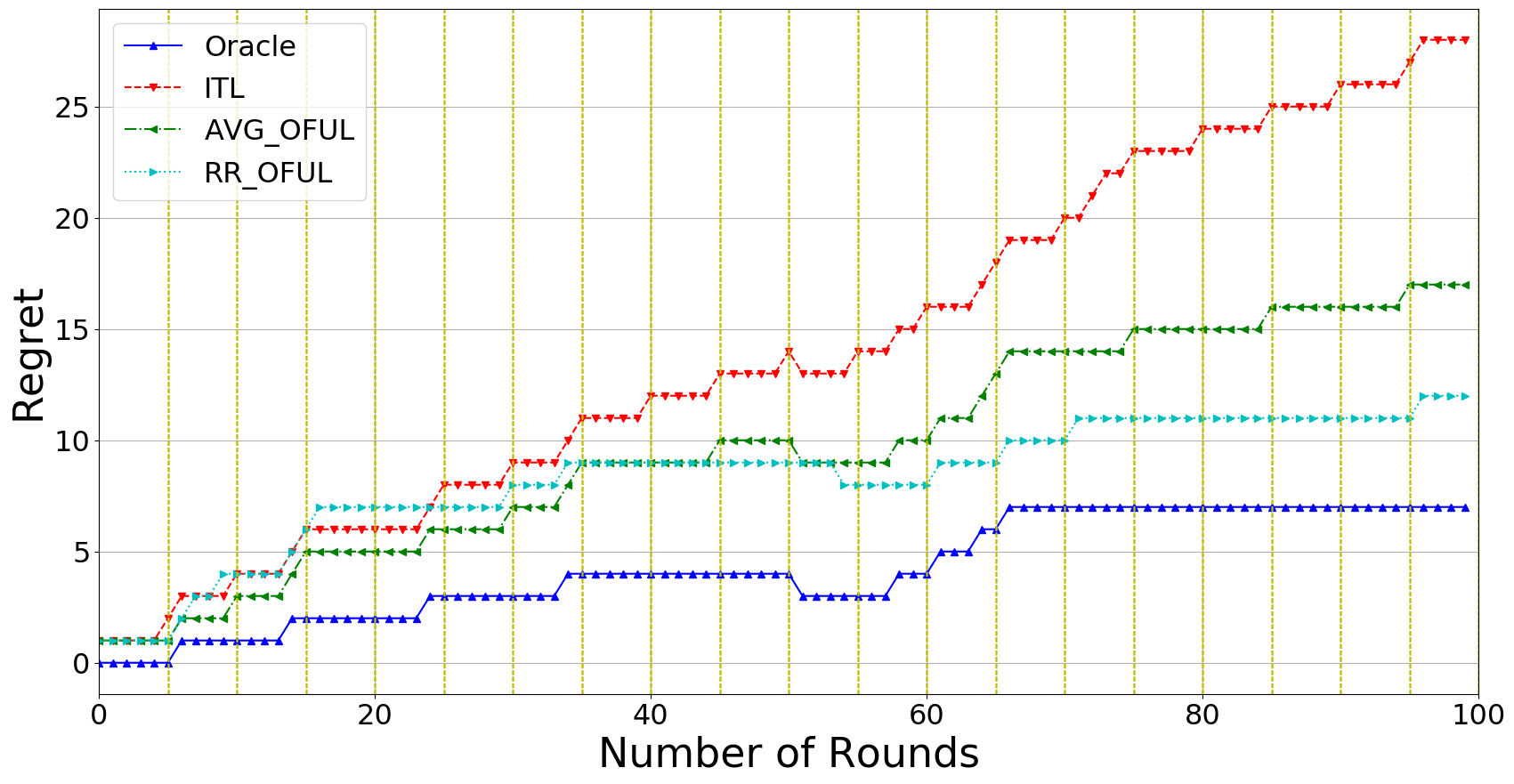}
    \caption{Empirical Transfer regret associated with Lastfm.}
    \label{fig:TransferRegretLastFM}
\end{figure}
\vspace{-.2truecm}
\paragraph{Movielens}
Here we consider the Movielens data \cite{Movielens}. It contains 1M anonymous ratings of approximately 3900 movies made by 6040 users. As before we first removed from the set of movies those with less than $500$ ratings, and from the set of users those with less than $200$ rated movies. This preprocessing procedure yields an user rating matrix of size 847 x 618. Unlike the Last.fm case, here adopting SVD to generate the arm/user vectors seems not appropriate. Indeed, by exploring the retrieved singular values, we could not find a subspace which provides a good approximation of the real ratings unless we keep all the latent features. Therefore, in order to find a set of similar users we observe better results by using the KMeans clustering algorithm over the user vectors. The results displayed in Figure \ref{fig:TransferRegretMovielens} were generated by running KMeans with $C=20$ clusters with user vectors of size $d=10$. We then picked all the resulting clusters by filtering out the clusterings with a silhouette value lower than $0.15$ and for each cluster of the clustering we have discarded those with less than $20$ users. Furthermore, in order to let the tasks be simpler, we reduced the variance of the noisy components affecting rewards to $0.1$. The difficulty in finding a valid set of similar tasks yields a high task misalignment, which is confirmed by the fact that the best performance occur for small value of $\lambda$. Indeed, Figure \ref{fig:TransferRegretMovielens} considers $\lambda=1$. Here the AVG-OFUL policy behaves almost equally to the ITL approach, conversely, the task misalignment caused bad performances to the RR-OFUL policy, confirming its higher sensitivity to task dissimilarity (see Theorem \ref{Th:TransferRegretBoundB}).
\begin{figure}[t]
    \centering
    \includegraphics[width=0.45\textwidth]{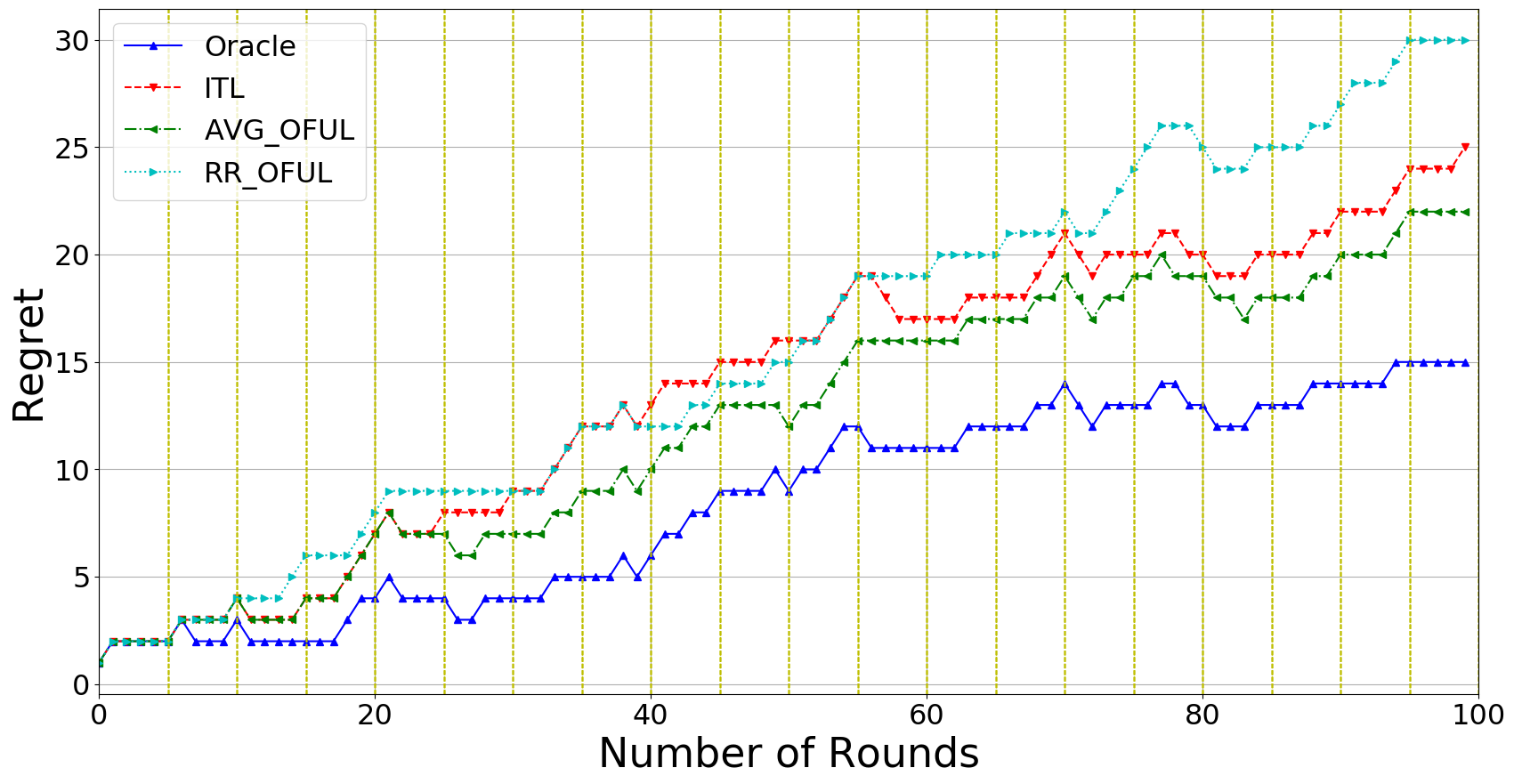}
    \caption{Empirical Transfer regret associated with Movielens.}
    \label{fig:TransferRegretMovielens}
\end{figure}
\section{Conclusions and Future Work}
In this work we studied a  meta-learning framework with stochastic linear bandit tasks. We have first introduced a novel regularized version of OFUL, where the regularization depends on the Euclidean distance to a bias vector. We showed that setting appropriately the bias leads a substantial improvement compared to learning each task in isolation. This observation motivated two alternative approaches to estimate this bias: while the first one may suffer a potentially high variance, the second might incur a strong bias.

In the future, it would be valuable to investigate the existence of unbiased estimators which do not suffer any variance. Furthermore, while in our analysis we set $\lambda=1/T\Var_\mathbf{h}$, in the future it would be also interesting to learn its value as part of the learning problem. Experimentally, we observed that when Assumption \ref{Ass:LowVariance} is satisfied, adopting the unbiased estimator yields better results than the second one, which is biased. One more direction of future research would be to extend other meta-learning approaches, such as those based on feature sharing, to the banding setting. Finally, a problem which remains to be studied is the combination of meta-learning with non-stochastic bandits.

% Bibliography
\bibliographystyle{plainnat}
\bibliography{reference}

% Supplementary Material
\clearpage
\onecolumn
\begin{center}
\setcounter{section}{0}
\textbf{\Large Supplemental Material}% for Learning to Learn with Stochastic Linear Bandits}
\end{center}

\appendix
\section{Proof of Theorem \ref{Th:BiasedConfidenceSet}}
    \begin{proof}
		Starting from the biased-regularized estimation of Equation \ref{Eq:BiasedParameterEstimate},
		\begin{align*}
			\widehat{\mathbf{w}}^\mathbf{h}_t &= \left(\mathbf{V}^\lambda_t\right)^{-1} \mathbf{X}^\top_t ( \mathbf{Y}_{t} - \mathbf{X}_t \mathbf{h}) + \mathbf{h} = \left(\mathbf{V}^\lambda_t\right)^{-1} \mathbf{X}_t^\top ( \mathbf{X}_{t} \mathbf{w}^* + \boldsymbol{\eta}_t - \mathbf{X}_t\mathbf{h} )  + \mathbf{h}\\
		  &= \left(\mathbf{V}^\lambda_t\right)^{-1} \mathbf{X}_t^\top \mathbf{X}_{t} \mathbf{w}^* + \left(\mathbf{V}^\lambda_t\right)^{-1} \mathbf{X}_t^\top \boldsymbol{\eta}_t - \left(\mathbf{V}^\lambda_t\right)^{-1} \mathbf{X}_t^\top \mathbf{X}_t\mathbf{h} + \mathbf{h}
		\end{align*}
		Given this construction we can obtain the following equalities:
		\begin{align*}
			\widehat{\mathbf{w}}^\mathbf{h}_t - \mathbf{w}^* &= \left(\mathbf{V}^\lambda_t\right)^{-1} \mathbf{X}_t^\top \boldsymbol{\eta}_t + \mathbf{h} - \left(\mathbf{V}^\lambda_t\right)^{-1} \mathbf{X}_t^\top \mathbf{X}_t \mathbf{h} - \lambda \left(\mathbf{V}^\lambda_t\right)^{-1} \mathbf{w}^*\\
			&= \left(\mathbf{V}^\lambda_t\right)^{-1} \mathbf{X}_t^\top \boldsymbol{\eta}_t + \big( \lambda \left(\mathbf{V}^\lambda_t\right)^{-1} \big) \big( \mathbf{h} - \mathbf{w}^* \big)
		\end{align*}
		Then, for any $\mathbf{x}\in\mathbb{R}^d$ the following holds:
		\begin{align*}
			\mathbf{x}^\top \big( \widehat{\mathbf{w}}^\mathbf{h}_t - \mathbf{w}^* \big) &= \langle \mathbf{x} , \mathbf{X}_t^\top \boldsymbol{\eta}_t  \rangle_{\left(\mathbf{V}^\lambda_t\right)^{-1}} + \lambda \langle \mathbf{x} , \mathbf{h} \rangle_{\left(\mathbf{V}^\lambda_t\right)^{-1}} - \lambda \langle \mathbf{x} , \mathbf{w}^* \rangle_{\left(\mathbf{V}^\lambda_t\right)^{-1}} \\
			&\leq \|\mathbf{x}\|_{\left(\mathbf{V}^\lambda_t\right)^{-1}} \bigg(\|\mathbf{X}_t^\top \boldsymbol{\eta}_t\|_{\left(\mathbf{V}^\lambda_t\right)^{-1}} + \lambda \|\mathbf{h} - \mathbf{w}^*\|_{\left(\mathbf{V}^\lambda_t\right)^{-1}} \bigg)
		\end{align*}
		where in the last step we have applied Cauchy-Schwarz inequality. Plugging in $\mathbf{x} = \mathbf{V}^\lambda_t (\widehat{\mathbf{w}}^\mathbf{h}_t - \mathbf{w}^*)$ we obtain:
		\begin{equation}
			\|\widehat{\mathbf{w}}^\mathbf{h}_t - \mathbf{w}^* \|^2_{\mathbf{V}^\lambda_t} \leq \|\widehat{\mathbf{w}}^\mathbf{h}_t - \mathbf{w}^* \|_{\mathbf{V}^\lambda_t} \bigg(\|\mathbf{X}_t^\top \boldsymbol{\eta}_t\|_{\left(\mathbf{V}^\lambda_t\right)^{-1}} + \lambda \|\mathbf{h} - \mathbf{w}^*\|_{\left(\mathbf{V}^\lambda_t\right)^{-1}} \bigg) \nonumber
		\end{equation}
		finally by dividing both sides by $\norm{\widehat{\mathbf{w}}^\mathbf{h}_t - \mathbf{w}^* }_{\mathbf{V}^\lambda_t}$ we obtain:
		\begin{equation}
			\norm{\widehat{\mathbf{w}}^\mathbf{h}_t - \mathbf{w}^* }_{\mathbf{V}^\lambda_t} \leq \norm{\mathbf{X}_t^\top \boldsymbol{\eta}_t}_{\left(\mathbf{V}^\lambda_t\right)^{-1}} + \lambda \norm{\mathbf{h} - \mathbf{w}^*}_{\left(\mathbf{V}^\lambda_t\right)^{-1}}. \nonumber
		\end{equation}
		Finally we bound the noisy term $\norm{\mathbf{X}_t^\top \boldsymbol{\eta}_t}_{\left(\mathbf{V}^\lambda_t\right)^{-1}}$ by leveraging on Theorem 1 of \citep{OFUL}, obtaining:
		\begin{equation} \label{Eq:RegConfBound}
			\norm{\widehat{\mathbf{w}}^\mathbf{h}_t - \mathbf{w}^* }_{\mathbf{V}^\lambda_t} \leq R \sqrt{2  \log\bigg( \frac{\det\left(\mathbf{V}^\lambda_t\right)^{1/2}}{\det(\lambda I)^{1/2}\delta} \bigg)} + \lambda^{\frac{1}{2}}\norm{\mathbf{h} - \mathbf{w}^*}_2 =  \beta^\mathbf{h}_t(\delta)
		\end{equation}
		where we have used the fact that: $\norm{\mathbf{h} - \mathbf{w}^*}^2_{\left(\mathbf{V}^\lambda_t\right)^{-1}} \leq \frac{1}{\lambda_{\min}\left(\mathbf{V}^\lambda_t\right)} \norm{\mathbf{h} - \mathbf{w}^*}^2_2 \leq \frac{1}{\lambda}\norm{\mathbf{h} - \mathbf{w}^*}^2_{2}$.
	\end{proof}
	
\section{Proof of Lemma \ref{Lemma:RegretFixedBias}}
\begin{proof}
	We start by analysing the instantaneous regret as follows:
	\begin{align*}
		r_t &= \langle \mathbf{w}^*, \mathbf{x}^*_t\rangle - \langle \mathbf{w}^*, \mathbf{x}^\mathbf{h}_t \rangle = \langle \mathbf{w}^*, \mathbf{x}^*_t\rangle - \langle \widetilde{\mathbf{w}}^\mathbf{h}_t, \mathbf{x}^\mathbf{h}_t \rangle + \langle \widetilde{\mathbf{w}}^\mathbf{h}_t, \mathbf{x}^\mathbf{h}_t \rangle - \langle \mathbf{w}^*, \mathbf{x}^\mathbf{h}_t \rangle \\
		&\leq \langle \widetilde{\mathbf{w}}^\mathbf{h}_t, \mathbf{x}_t\rangle - \langle \mathbf{w}^*, \mathbf{x}^\mathbf{h}_t \rangle = \langle \widehat{\mathbf{w}}^\mathbf{h}_{t-1} - \mathbf{w}^*, \mathbf{x}^\mathbf{h}_t\rangle + \langle \widetilde{\mathbf{w}}^\mathbf{h}_t -  \widehat{\mathbf{w}}^\mathbf{h}_{t-1} , \mathbf{x}^\mathbf{h}_t\rangle \\
		&\leq \norm{ \widehat{\mathbf{w}}^\mathbf{h}_{t-1} - \mathbf{w}^* }_{\mathbf{V}^\lambda_{t-1}} \norm{\mathbf{x}^\mathbf{h}_t}_{\mathbf{V}^\lambda_{t-1}} + \norm{ \widetilde{\mathbf{w}}^\mathbf{h}_t -  \widehat{\mathbf{w}}^\mathbf{h}_{t-1}}_{\mathbf{V}^\lambda_{t-1}} \norm{\mathbf{x}^\mathbf{h}_t}_{\mathbf{V}^\lambda_{t-1}} \leq 2 \beta_{t-1}^{\mathbf{h}}(\delta)\norm{\mathbf{x}^\mathbf{h}_t}_{\mathbf{V}^\lambda_{t-1}}
	\end{align*}
	where in the first inequality we have leveraged on the fact that $\big( \widetilde{\mathbf{w}}^\mathbf{h}_t, \mathbf{x}_t \big)$ is optimistic and in the last the ellipsoid bound specified in Equation \ref{Eq:RegConfBound}. The bound of the cumulative regret follows from the bound of \citep{OFUL}, hence with probability at least $1 - \delta$, for all $T \geq 0$:
	\begin{align*}
		R(T, \mathbf{w}^*) &\leq \sqrt{T \sum_{t=1}^{T} {r_t}^2} \leq 4  \sqrt{ T \log\big(\det\left(\mathbf{V}^\lambda_t\right)\big)-\log\big(\det(\lambda \mathbf{I})\big)}\beta_T^\mathbf{h}(\delta)\\
		&\leq4 \sqrt{Td\log\bigg(1 + \frac{TL}{ \lambda d}\bigg)}\bigg(\lambda^\frac{1}{2}\norm{\mathbf{w}^* - \mathbf{h}}_2 + R \sqrt{2\log(1/\delta) + d \log\big(1 + TL/(\lambda d)\big)}\bigg)
	\end{align*}
	where the last two steps follow from Lemma 11 of \citep{OFUL} and the definition of $\beta^\mathbf{h}(\delta)$ (Equation \ref{Eq:RegConfBound}). The stated result is derived analogously to Corollary 19.3 of \cite{lattimore2018bandit} considering $\delta =  \frac{1}{T}$.
\end{proof}

\section{Proof of Corollary \ref{Cor:RightH}}
\begin{proof}
We start by considering the \textbf{oracle} scenario which is given by $\mathbf{h}=\mathbf{w}^*$.
\begin{align*}
    &\lim_{\lambda\to\infty} \Bigg[C \sqrt{Td\log\bigg(1 + \frac{TL}{ \lambda d}\bigg)}\bigg( R \sqrt{d \log(T + T^2 L/(\lambda d))}\bigg)\Bigg]\\
    &=C \sqrt{Td\log(1)}\bigg( R \sqrt{d \log(T + T^2 L/(\lambda d))}\bigg) = 0
\end{align*}
As far as the \textbf{independent task learning} scenario concerns, the following holds:
\begin{align*}
    &\lim_{\lambda\to\infty}C\sqrt{Td\log\bigg(1 + \frac{TL}{ \lambda d}\bigg)}\bigg(\lambda^\frac{1}{2}S + R \sqrt{d \log(T + T^2 L/(\lambda d))}\bigg)\\
    &=\lim_{\epsilon\to 0}C\sqrt{Td\log\big(1+\epsilon\big)}\Bigg(S\sqrt{\frac{TL}{\epsilon d}} + R \sqrt{d \log(T + T^2 L/(\lambda d))}\Bigg)\\
    &=\lim_{\epsilon\to 0} C\Bigg[ST\sqrt{\frac{Ld}{d}\frac{ \log\big(1+\epsilon\big)}{\epsilon}} + Rd\sqrt{T\log\big(1+\epsilon\big)\log(T + T^2 L/(\lambda d))}\Bigg]\\
    &=\lim_{\epsilon\to 0} C\Bigg[ST\sqrt{L} + Rd\sqrt{T\log\big(1+\epsilon\big) \log(T + T^2 L/(\lambda d))}\bigg)\Bigg] =CTS\sqrt{L}
\end{align*}
where we have used the substitution $\epsilon = \frac{TL}{\lambda d}$ and the fact that $\lim_{\epsilon\to 0}\frac{\log(1+\epsilon)}{\epsilon}\to 1$.
\end{proof}

\section{Proof of Theorem \ref{Th:TransferRegretBoundB}}\label{Sec:AppendixSolutionB}
We start by presenting two Lemmas which are necessary to obtain the final bound. Firstly, we need to introduce an additional variable:
\begin{equation*}
    \Hbar_{N,t+1}' = \left(\Vtilde_{N,t} \right)^{-1} \Bigg(\sum_{j=1}^{N}\mathbf{V}_{j,T} \mathbf{w}_j + \V_{N+1,t} \w_{N+1} \Bigg)
\end{equation*}
We will then split the analysis by studying separately the \textit{estimation error} $\hhat^\lambda_{N,t+1} - \Hbar_{N,t+1}'$ (Lemma \ref{Lemma:EstimationError}) and the \textit{estimation bias} $\Hbar_{N,t+1}' - \Hbar_{N,t+1}$ (Lemma \ref{Lemma:Bias}).
\begin{lemma}\label{Lemma:EstimationError}
    The following rewriting holds:
    \begin{equation*}
        \hhat^\lambda_{N,t+1} - \Hbar_{N,t+1}' = \left(\Vtilde^\lambda_{N,t}\right)^{-1} \Bigg( \sum_{j=1}^{N}\sum_{s=1}^{T} \mathbf{x}_{j,s} \eta_{j,s} + \sum_{s=1}^t \x_{N+1,s} \eta_{N+1,s} \Bigg) - \lambda \left(\Vtilde^\lambda_{N,t}\right)^{-1}\Hbar_{N,t+1}'
    \end{equation*}
\end{lemma}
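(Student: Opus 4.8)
The plan is purely algebraic: start from the definition $\hhat^\lambda_{N,t+1} = (\Vtilde^\lambda_{N,t})^{-1}\btilde_{N,t}$ obtained from Equation~\eqref{Eq:GlobalRR} by shifting the index, and expand the aggregated vector $\btilde_{N,t} = \sum_{j=1}^N \mathbf{b}_{j,T} + \mathbf{b}_{N+1,t}$ through the linear reward model. First I would rewrite each block $\mathbf{b}_{j,T} = \mathbf{X}_{j,T}^\top \mathbf{Y}_{j,T}$ by substituting $\mathbf{Y}_{j,T} = \mathbf{X}_{j,T}\w_j + \boldsymbol{\eta}_{j,T}$, which gives $\mathbf{b}_{j,T} = \V_{j,T}\w_j + \sum_{s=1}^T \mathbf{x}_{j,s}\eta_{j,s}$ using $\mathbf{X}_{j,T}^\top \mathbf{X}_{j,T} = \V_{j,T}$, and treat the partial current block $\mathbf{b}_{N+1,t}$ identically. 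Summing over $j$ then decomposes $\btilde_{N,t} = \mathbf{g} + \mathbf{n}$, where $\mathbf{g} = \sum_{j=1}^N \V_{j,T}\w_j + \V_{N+1,t}\w_{N+1}$ is a signal part and $\mathbf{n}$ is exactly the noise double sum appearing in the statement.

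Next I would connect $\mathbf{g}$ to $\Hbar_{N,t+1}'$. By its definition $\Hbar_{N,t+1}' = (\Vtilde_{N,t})^{-1}\mathbf{g}$, i.e. $\mathbf{g} = \Vtilde_{N,t}\Hbar_{N,t+1}'$. The only delicate bookkeeping is the regularization mismatch: $\Hbar'$ is built from the unregularized matrix $\Vtilde_{N,t}$, whereas $\hhat^\lambda$ uses $\Vtilde^\lambda_{N,t} = \lambda\I + \Vtilde_{N,t}$. Writing $\Vtilde_{N,t} = \Vtilde^\lambda_{N,t} - \lambda\I$ gives $\mathbf{g} = \Vtilde^\lambda_{N,t}\Hbar_{N,t+1}' - \lambda\Hbar_{N,t+1}'$, so that $(\Vtilde^\lambda_{N,t})^{-1}\mathbf{g} = \Hbar_{N,t+1}' - \lambda(\Vtilde^\lambda_{N,t})^{-1}\Hbar_{N,t+1}'$. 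Adding back the noise contribution $(\Vtilde^\lambda_{N,t})^{-1}\mathbf{n}$ and subtracting $\Hbar_{N,t+1}'$ from both sides produces exactly the claimed identity.

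There is no real obstacle here; this is a rewriting lemma whose entire content is careful tracking of the $\lambda$-regularization. The single point to watch is the $\Vtilde$-versus-$\Vtilde^\lambda$ distinction in the definition of $\Hbar_{N,t+1}'$: it is precisely the passage from the unregularized inverse to $(\Vtilde^\lambda_{N,t})^{-1}$ that generates the residual term $-\lambda(\Vtilde^\lambda_{N,t})^{-1}\Hbar_{N,t+1}'$, and had $\Hbar'$ been defined with the regularized inverse this term would vanish. I would therefore keep this distinction explicit throughout.
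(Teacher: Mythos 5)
Your proposal is correct and follows essentially the same route as the paper's proof: expand $\btilde_{N,t}$ via the linear reward model into a signal part $\sum_{j}\V_{j,T}\w_j + \V_{N+1,t}\w_{N+1}$ plus the noise double sum, identify the signal part with $\Vtilde_{N,t}\Hbar_{N,t+1}'$, and absorb the regularization mismatch through $\Vtilde_{N,t} = \Vtilde^\lambda_{N,t} - \lambda\I$ (the paper does this equivalently by inserting $(\Vtilde^\lambda_{N,t})^{-1}\Vtilde_{N,t}\Vtilde_{N,t}^{-1}$ and adding and subtracting $\lambda(\Vtilde^\lambda_{N,t})^{-1}\Hbar_{N,t+1}'$). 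Your explicit handling of the $\Vtilde$-versus-$\Vtilde^\lambda$ distinction is exactly the content of the lemma, so nothing is missing.
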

\begin{proof}
    \begin{align*}
        \hhat^\lambda_{N,t+1} &= \left(\Vtilde^\lambda_{N,t}\right)^{-1}\btilde_{N,t} = \left(\Vtilde^\lambda_{N,t}\right)^{-1}\Bigg(\sum_{j=1}^{N}\sum_{s=1}^{T} \mathbf{x}_{j,s} y_{j,s} + \sum_{s=1}^{t} \x_{N+1,s} y_{N+1,s} \Bigg) \\
        &= \left(\Vtilde^\lambda_{N,t}\right)^{-1} \Bigg( \sum_{j=1}^{N}\sum_{s=1}^{T} \mathbf{x}_{j,s} \left(\mathbf{x}_{j,s}^\top \mathbf{w}_j + \eta_{j,s} \right) + \sum_{s=1}^{t} \x_{N+1,s} \left(\x_{N+1,s}^\top \w_{N+1} + \eta_{N+1,s} \right)\Bigg)\\
        &= \left(\Vtilde^\lambda_{N,t}\right)^{-1} \Bigg( \sum_{j=1}^{N}\sum_{s=1}^{T} \mathbf{x}_{j,s} \eta_{j,s} + \\
        &\quad+ \sum_{s=1}^t \x_{N+1,s} \eta_{N+1,s} \Bigg) + \left(\Vtilde^\lambda_{N,t}\right)^{-1} \Bigg( \sum_{j=1}^{N}\sum_{s=1}^{T} \mathbf{x}_{j,s} \mathbf{x}_{j,s}^\top \mathbf{w}_j + \sum_{s=1}^t \x_s \x_{N+1,s} ^\top \w_{N+1} \Bigg)\\
        &= \left(\Vtilde^\lambda_{N,t}\right)^{-1} \Bigg( \sum_{j=1}^{N}\sum_{s=1}^{T} \mathbf{x}_{j,s} \eta_{j,s} + \sum_{s=1}^t \x_{N+1,s} \eta_{N+1,s} \Bigg) + \\
        &\quad+ \left(\Vtilde^\lambda_{N,t}\right)^{-1} \Vtilde_{N,t} \left(\Vtilde_{N,t} \right)^{-1} \Bigg(\sum_{j=1}^{N} \mathbf{V}_{j,T} \mathbf{w}_j + \V_{N+1,t} \w_{N+1} \Bigg)\\
        &= \left(\Vtilde^\lambda_{N,t}\right)^{-1} \Bigg( \sum_{j=1}^{N}\sum_{s=1}^{T} \mathbf{x}_{j,s} \eta_{j,s} + \sum_{s=1}^t \x_{N+1,s} \eta_{N+1,s} \Bigg) + \left(\Vtilde^\lambda_{N,t}\right)^{-1} \Vtilde_{N,t} \Hbar_{N,t+1}' + \\
        &\quad+ \lambda \left(\Vtilde^\lambda_{N,t}\right)^{-1} \Big[ \Hbar_{N,t+1}' - \Hbar_{N,t+1}' \Big]\\
        &= \left(\Vtilde^\lambda_{N,t}\right)^{-1} \Bigg( \sum_{j=1}^{N}\sum_{s=1}^{T} \mathbf{x}_{j,s} \eta_{j,s} + \sum_{s=1}^t \x_{N+1,s} \eta_{N+1,s} \Bigg) + \Hbar_{N,t+1}' - \lambda \left(\Vtilde^\lambda_{N,t}\right)^{-1}\Hbar_{N,t+1}'
    \end{align*}
    which gives the claimed result.
\end{proof}

\begin{lemma}\label{Lemma:Bias}
According to what we have done in Section \ref{Sec:SolutionA}, we use:
\[
    \Hbar_{N,t+1} = \frac{1}{NT+t} \left( \sum_{j=1}^{N} T \mathbf{w}_j + t \w_{N+1} \right).
\]
Differently from $\Hbar_{N,t}'$ this definition is a weighted average of the vectors of the $N$ completed tasks. Hence, we have:
\begin{align*}
    \norm{\wbar - \Hbar_{N,t}'}
    &\leq \frac{1}{NT+t} \sum_{j=1}^{N}\bigg[ \norm{\wbar - \Hbar_{N,t}} + (NT+t) \norm{\Hbar_{N,t} - \Hbar_{N,t}}'\bigg]\\
    &= H_\rho(N + 1,\wbar) + \norm{\Hbar_{N,t} - \Hbar_{N,t}'} 
\end{align*}
where we have denoted with $H_\rho(N + 1,\wbar)$ according to what we have done in Section \ref{Sec:SolutionA}. We can now focus on the term $\norm{\Hbar_{N,t}' - \Hbar_{N,t}}$ which can be equivalently rewritten as follows:
\begin{align*}
    \norm{\Hbar_{N,t+1}' - \Hbar_{N,t+1}} &= \norm{\left(\Vtilde_{N,t}\right)^{-1} \sum_{j-1}^{N} \left( \mathbf{V}_{j,T} \mathbf{w}_j + \V_{N+1,t} \w_{N+1} \right) - \Hbar_{N,t}}\\
    &\leq \sum_{j=1}^{N} \Big| \Vtilde_{N,t}^{-1} \mathbf{V}_{j,T} \Big| \norm{ \mathbf{w}_j - \Hbar_{N,t}} + \Big| \Vtilde_{N,t}^{-1} \V_{N+1,t} \Big| \norm{ \w_{N+1} - \Hbar_{N,t}}\\
    &\leq \sum_{j=1}^{N} H_\rho(N+1,\mathbf{w}_j) \Big| \Vtilde_{N,t}^{-1} \mathbf{V}_{j,T} \Big| + H_\rho(N+1,\w) \Big|\Vtilde_{N,t}^{-1} \V_t \Big|\\
    &= \sum_{j=1}^{N} H_\rho(N+1,\mathbf{w}_j) \sigma_{\max} \bigg(\mathbf{V}_{j,t}\Vtilde^{-1}_{N,t} \bigg) + H_\rho(N+1,\w_{N+1}) \sigma_{\max} \bigg(\mathbf{V}_{j,t}\Vtilde^{-1}_{N,t} \bigg)\\
    &\leq (N+1) \max_{j=1,\dots,N+1} \Bigg( H_\rho(N+1, \w_j) \sigma_{\max} \bigg( \mathbf{V}_{j,t}\Vtilde^{-1}_{N,t} \bigg) \Bigg) = (N+1) \max_{j=1,\dots,N+1} \widetilde{H}(N+1, \w_j)
\end{align*}
We have used the fact that the matrix norm of a given matrix $A$ induced by the Euclidean norm corresponds to the spectral norm, which is the largest singular value of the matrix $\sigma_{\max}(A)$ .
\end{lemma}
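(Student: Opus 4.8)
The plan is to prove the bound on $\norm{\wbar - \Hbar_{N,t}'}$ in two stages, isolating the part that vanishes by concentration (handled by the empirical-mean analysis of Section~\ref{Sec:SolutionA}) from the deterministic ``reweighting'' discrepancy that this lemma is really about. First I would insert $\Hbar_{N,t}$ as a pivot and apply the triangle inequality, $\norm{\wbar - \Hbar_{N,t}'} \leq \norm{\wbar - \Hbar_{N,t}} + \norm{\Hbar_{N,t} - \Hbar_{N,t}'}$. The first summand is, by the definition introduced in Section~\ref{Sec:SolutionA}, exactly $H_\rho(N+1,\wbar)$, so the entire burden falls on the second summand. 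The intuition I would exploit is that $\Hbar_{N,t}'$ and $\Hbar_{N,t}$ are two \emph{weighted averages of the same task vectors} $\w_1,\dots,\w_{N+1}$: $\Hbar_{N,t}'$ weights them by the design matrices $\Vtilde_{N,t}^{-1}\V_{j,T}$, whereas $\Hbar_{N,t}$ weights them by the scalar sample counts $T/(NT+t)$ and $t/(NT+t)$. Their gap should therefore be controllable through how ``misaligned'' the matrix weights are.

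The key algebraic step is to rewrite $\Hbar_{N,t}'-\Hbar_{N,t}$ so that each task vector enters only through its deviation from $\Hbar_{N,t}$. Since $\Vtilde_{N,t} = \sum_{j=1}^{N}\V_{j,T} + \V_{N+1,t}$ by definition, I have $\Vtilde_{N,t}^{-1}\Vtilde_{N,t}=\I$, hence $\Hbar_{N,t} = \Vtilde_{N,t}^{-1}\bigl(\sum_{j=1}^{N}\V_{j,T} + \V_{N+1,t}\bigr)\Hbar_{N,t}$. Subtracting this from the definition of $\Hbar_{N,t}'$ collapses the expression to
\[
\Hbar_{N,t}'-\Hbar_{N,t} = \Vtilde_{N,t}^{-1}\Bigl(\sum_{j=1}^{N}\V_{j,T}(\w_j-\Hbar_{N,t}) + \V_{N+1,t}(\w_{N+1}-\Hbar_{N,t})\Bigr).
\]
Applying the triangle inequality over the $N+1$ summands together with submultiplicativity of the operator norm then yields $\norm{\Hbar_{N,t}'-\Hbar_{N,t}} \leq \sum_{j=1}^{N}\bigl|\Vtilde_{N,t}^{-1}\V_{j,T}\bigr|\,\norm{\w_j-\Hbar_{N,t}} + \bigl|\Vtilde_{N,t}^{-1}\V_{N+1,t}\bigr|\,\norm{\w_{N+1}-\Hbar_{N,t}}$.

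It remains to identify each factor. Each $\norm{\w_j-\Hbar_{N,t}}$ is, by the same naming convention, $H_\rho(N+1,\w_j)$. For the matrix factor I would use that the operator norm induced by the Euclidean norm equals the spectral norm, i.e. the largest singular value, so $\bigl|\Vtilde_{N,t}^{-1}\V_{j,T}\bigr|=\sigma_{\max}(\Vtilde_{N,t}^{-1}\V_{j,T})$; because $\V_{j,T}$ and $\Vtilde_{N,t}$ are symmetric, the identity $\sigma_{\max}(A)=\sigma_{\max}(A^\top)$ rewrites this as $\sigma_{\max}(\V_{j,T}\Vtilde_{N,t}^{-1})$, matching the misalignment factor used in the statement. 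Each product $H_\rho(N+1,\w_j)\,\sigma_{\max}(\V_{j,T}\Vtilde_{N,t}^{-1})$ is by definition $\Htilde(N+1,\w_j)$, and bounding a sum of $N+1$ nonnegative terms by $(N+1)$ times its maximum closes the argument.

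The main obstacle I anticipate is purely the bookkeeping of the central algebraic identity: spotting that $\Vtilde_{N,t}^{-1}\Vtilde_{N,t}=\I$ lets me re-center every task vector on $\Hbar_{N,t}$ is what makes the matrix weights act on deviations rather than on the raw $\w_j$, and this is essential since otherwise the factors would not be small even for identical tasks. Two technical points also deserve care: the identity $\sigma_{\max}(\Vtilde_{N,t}^{-1}\V_{j,T})=\sigma_{\max}(\V_{j,T}\Vtilde_{N,t}^{-1})$ must be justified through symmetry rather than a (false in general) commutation of singular values, and the whole argument presupposes that $\Vtilde_{N,t}$ is invertible, which is precisely the nondegeneracy guaranteed by the $\nu_{\min}\geq (N+1)\log T$ analysis of Section~\ref{Sec:SolutionB}.
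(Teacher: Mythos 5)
Your proposal is correct and follows essentially the same route as the paper's own argument: a triangle inequality through $\Hbar_{N,t}$ to isolate $H_\rho(N+1,\wbar)$, then bounding $\norm{\Hbar_{N,t}'-\Hbar_{N,t}}$ by re-centering each task vector on $\Hbar_{N,t}$ via $\Vtilde_{N,t}^{-1}\Vtilde_{N,t}=\I$, applying the operator-norm bound term by term, identifying $\bigl|\Vtilde_{N,t}^{-1}\V_{j,T}\bigr|$ with $\sigma_{\max}\bigl(\V_{j,T}\Vtilde_{N,t}^{-1}\bigr)$, and taking $(N+1)$ times the maximum. Your explicit justification of the re-centering identity (which the paper's displayed chain uses only implicitly), of the transpose argument $\sigma_{\max}(A)=\sigma_{\max}(A^\top)$ combined with symmetry of the factors, and of the invertibility of $\Vtilde_{N,t}$ via $\nu_{\min}>0$ only makes the same proof more rigorous.
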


\subsection{Proof of Theorem \ref{Th:TransferRegretBoundB}}
We start the analysis from the result of Lemma \ref{Lemma:TransferRegretBound}:
\begin{equation*}
    \mathcal{R}(T,\rho) \leq d \sqrt{T \log\left(1 + \frac{T^2L \bigg(\mathbb{E}_{\mathbf{w}\sim\rho}\Big[\norm{\mathbf{w} - \mathbf{h}}^2_2\Big]\bigg)}{d}\right)}
\end{equation*}
we can then set the hyperparameter $\mathbf{h}=\hhat_{N,T}^\lambda$ and focusing on the first term in brackets we obtain:
\begin{equation*}
    \sqrt{\mathbb{E}_{\mathbf{w}\sim\rho}\Bigg[\norm{\mathbf{w} - \hhat_{N,T}^\lambda}^2_2\Bigg]} \leq \sqrt{\var_\wbar} + \sqrt{\epsilon_{N,t}(\rho)}
\end{equation*}
According to  Lemma \ref{Lemma:Bias} the following rewriting holds:
\begin{equation*}
    \sqrt{\epsilon_{N,t}(\rho)} \leq H_\rho(N + 1,\wbar) +  (N+1) \max_{j=1,\dots,N+1} \widetilde{H}(N+1, j) + \norm{\Hbar_{N,T}' - \hhat_{N,T}^\lambda}_2
\end{equation*}
It remains only to apply Lemma \ref{Lemma:EstimationError} which gives:
\begin{align*}
    \norm{\Hbar_{N,T}' - \hhat_{N,T}^\lambda}_2 &=  \norm{\left(\Vtilde^\lambda_{N,T}\right)^{-1} \left( \sum_{j=1}^{N}\sum_{s=1}^{T} \mathbf{x}_{j,s} \eta_{j,s} + \sum_{s=1}^T \x_s \eta_s \right)}_2+ \norm{\lambda \left(\Vtilde^\lambda_{N,T}\right)^{-1}\Hbar_{N,T}'}_2 \\
    &\leq \norm{\sum_{j=1}^{N}\sum_{s=1}^{T} \mathbf{x}_{j,s} \eta_{j,s} + \sum_{s=1}^T \x_s \eta_s}_{\left(\Vtilde^\lambda_{N,T}\right)^{-2}} + \lambda \norm{\Hbar_{N,T}'}_{_{\left(\Vtilde^\lambda_{N,T}\right)^{-2}}}\\
    &\leq \frac{1}{\lambda^\frac{1}{2}_{\min}(\Vtilde^\lambda_{N,T})}\norm{\sum_{j=1}^{N}\sum_{s=1}^{T} \mathbf{x}_{j,s} \eta_{j,s} + \sum_{s=1}^T \x_s \eta_s}_{\left(\Vtilde^\lambda_{N,T}\right)^{-1}} + \frac{1}{\lambda_{\min}(\Vtilde^\lambda_{N,T})} \norm{\Hbar_{N,T}'}_{2}\\
    &\leq \frac{1}{\lambda^\frac{1}{2}_{\min}(\Vtilde^\lambda_{N,T})} R\sqrt{2\log\bigg(T+\frac{(NT + T)TL^2}{\lambda d}\bigg)} + \norm{\Hbar_{N,T}' - \Hbar_{N,T}}_{2} + \frac{1}{\lambda_{\min}(\Vtilde^\lambda_{N,T})} \norm{\Hbar_{N,T}}_2\\
    &\leq \frac{1}{\lambda^\frac{1}{2}_{\min}(\Vtilde^\lambda_{N,T})} R\sqrt{2\log\bigg(T+\frac{(NT + T)TL^2}{\lambda d}\bigg)} + \norm{\Hbar_{N,T}' - \Hbar_{N,T}}_{2} + \frac{S}{\lambda_{\min}(\Vtilde^\lambda_{N,T})}\\
    &\leq \frac{1}{\lambda^\frac{1}{2}_{\min}(\Vtilde^\lambda_{N,T})} R\sqrt{2\log\bigg(T+\frac{(NT + T)TL^2}{\lambda d}\bigg)} + (N+1) \max_{j=1,\dots,N+1} \widetilde{H}(N+1, j) + \frac{S}{\lambda_{\min}(\Vtilde^\lambda_{N,T})}\\
\end{align*}
where in the last inequality we have applied once more Lemma \ref{Lemma:Bias}.
We can now introduce $\nu_{\min} = \lambda_{\min}\big(\Vtilde_{N,T}\big)$ as the minimum eigenvalue of the global covariance matrix without regularization which gives the following bound:
\begin{equation*}
    \frac{1}{\lambda_{\min}(\Vtilde^\lambda_{N,T})}\leq \frac{1}{\lambda + \nu_{\min}}
\end{equation*}
putting everything together gives the claimed result:
\begin{align*}
    \sqrt{\epsilon_{N,T}(\rho)} &\leq H_\rho(N + 1,\wbar) + 2 (N+1) \max_{j=1,\dots,N+1} \widetilde{H}(N+1,\w_j)  + \\
    &\quad+  \frac{1}{(\lambda + \nu_{\min})^{\frac{1}{2}}} R\sqrt{2\log\bigg(T+\frac{(NT + T)TL^2}{\lambda d}\bigg)} + \frac{S}{\lambda + \nu_{\min}}
\end{align*}

\end{document}